\def\eqref#1{(\ref{#1})}
\def\marginpar#1{\ignorespaces}
\newtheorem{theorem}{Theorem}
\newtheorem{lemma}[theorem]{Lemma}
\newtheorem{assump}[theorem]{Assumption}
\numberwithin{equation}{section}
\begin{document}
\title[CDPM]{Contractive Diffusion Probabilistic Models}

\author[Wenpin Tang]{{Wenpin} Tang}
\thanks{Emails: \{wt2319,\,hz2684\}@columbia.edu, Department of Industrial Engineering and Operations Research, Columbia University.}

\author[Hanyang Zhao]{{Hanyang} Zhao}

\date{\today}
\begin{abstract}
Diffusion probabilistic models (DPMs) have emerged as a promising technique in generative modeling. The success of DPMs relies on two ingredients: time reversal of diffusion processes and score matching. 
In view of possibly unguaranteed score matching, we propose a new criterion -- the contraction property of backward sampling in the design of DPMs, leading to a novel class of contractive DPMs (CDPMs). 
Our key insight is that,
the contraction property can provably narrow score matching errors and discretization errors, thus our proposed CDPMs are robust to both sources of error. 
For practical use, we show that CDPM can leverage weights of pretrained DPMs by a simple transformation, and does not need retraining. 
We corroborated our approach by experiments on synthetic 1-dim examples, 
Swiss Roll, MNIST, CIFAR-10 32$\times$32 and AFHQ 64$\times$64 dataset. 
Notably, CDPM steadily improves the performance of baseline score-based diffusion models.
\end{abstract}

\maketitle

\textit{Key words}: Contraction, diffusion probabilistic models, discretization, generative models,
image synthesis, sampling, score matching, stochastic differential equations.

\section{Introduction}
\label{sc1}

\quad 
Diffusion probabilistic models (DPMs) have emerged as a 
powerful framework for generative modeling, outperform generative adversarial networks (GANs) on image and audio synthesis \cite{Dh21, Kong21},
and underpins the major accomplishment in text-to-image creators such as 
DALL$\cdot$E 2 \cite{Ramesh22}, Stable Diffusion \cite{Rombach22},
and the text-to-video generator Sora \cite{Sora}.
The concept of DPMs finds its roots in energy-based models \cite{Sohl15},
and is popularized by  \cite{Ho20, Song19, Song20}
in an attempt to produce from noise new samples 
(e.g. images, audio, text)
that resemble the target data, while maintaining diversity.
See \cite{chen2024overview, Tutorial, Yang23} for a review on DPMs.

\quad DPMs relies on forward-backward Markov processes.
{\it The forward process} starts with the target data distribution, 
and runs for some time until the signal is destroyed --
this gives rise to {\em noise} or a non-informative prior. 
{\it The backward process} is then initiated with this prior, 
and reverses the forward process in time to generate samples
whose distribution is close to the target distribution.
In \cite{Austin21, Ho20, Song19}, DPMs are discrete time-indexed Markov chains;
\cite{Chen23, Song21, Song20} model DPMs in continuous time as 
stochastic differential equations (SDEs).
Nevertheless,
there is no conceptual distinction between discrete and continuous DPMs,
as continuous DPMs can be viewed as the continuum limits of discrete DPMs,
and discrete DPMs are time discretizations of the latter.
In this paper, we adopt a {\em continuous-time} score-based perspective,
and algorithms are derived by discretization.

\quad There has been great interest in improving diffusion models further in terms of image/distribution quality, training cost, and generation speed. In this work, we focus on understanding and improving the choice of drift and diffusion coefficients of the governed SDE in the forward process, which will affect the generation quality of the DPMs. Popular examples include
Ornstein-Uhlenbeck (OU) processes \cite{DV21},
variance exploding (VE) SDEs \cite{Song19},
variance preserving (VP) SDEs \cite{Ho20},
and sub-variance preserving (subVP) SDEs \cite{Song20} (see Table \ref{tab:SDEs coefficients} for details). Despite the empirical success of diffusion models driven by these specific SDEs, theoretical understanding of these designs are limited and a fundamental question remains unanswered:
$$\textit{How to design parameters of a DPM for better generation quality?}$$
\quad There has been controversy around the best design of the diffusion model/SDE parameters, e.g. one should choose VP or VE for his own training of diffusion models. According to empirical studies, the design of exploding variance in VE could yield a better result for CIFAR10 and ImageNet synthesis, e.g. in EDM baseline \cite{karras2022elucidating}. However, the reason for this choice is highly empirical and with minimal theoretical justifications. Why VE SDE performs better than VP SDE? The purpose of this paper is to partly answer these questions theoretically and further improve the performance of diffusion models by our theorectical findings. Our proposal is, when facing score matching errors in score-based diffusion models:
$$\textit{The backward process of the DPM shall be \textbf{contractive}.}$$

\quad Here we explain the high-level idea of ``contractive-ness". 
For a diffusion process, ``contractive-ness'' is related to the (long-time) convergence of the process to stationarity. By letting $\mathcal{L}$ be the infinitesimal generator, it is specified by the Lyapunov type condition:
$
\mathcal{L} f \leq-c
$ (or a stronger form $\mathcal{L} f \leq-c f$) outside a compact domain, for some test function $f$ and $c>0$. When adapting to diffusion models, there are two tweaks:
(1) the Lyapunov condition is implicitly satisfied for some diffusion coefficients, and (2) there are score matching errors. 
We further propose a simplified checkable necessary condition on diffusion coefficients to guarantee a contractive backward process, which we elaborate in Section 3.

\quad We call DPMs that yield a contractive backward process as {\em contractive DPMs} (CDPMs).
At a high level, 
the contraction of the backward process will prevent score matching errors,
which may be wild, from expanding over the time.
The contributions of this work are summarized as follows.
\vskip 5 pt
{\bf Methodology}:
We propose a new criterion for designing DPMs.
This naturally leads to a novel class of DPMs,
including contractive OU processes and contractive subVP SDEs.
The idea of requiring the backward process to be contractive
stems from sampling theory of SDEs,
so our methodology is theory-oriented. 
To our best knowledge, 
this is the first paper to integrate contraction into the design of DPMs,
with both theoretical guarantees
and good empirical performance.
\vskip 5 pt
{\bf Theory}:
We prove Wasserstein bounds between contractive DPM samplers
and the target data distribution.
While most previous work (e.g., \cite{Chen23, DV21, LLT22}) 
focused on Kullback–Leibler (KL) or total variation bounds for OU processes,
we consider the Wasserstein metric because  
it has shown to align with human judgment on image similarity \cite{Bor19}, and the standard evaluation metric -- Fr\'echet inception distance (FID) is based on Wasserstein distance.
Early work \cite{DV21, KW22} gave Wasserstein bounds for the existing DPMs (OU processes, VE and VP SDEs)
with exponential dependence on $T$.
This was improved in recent studies \cite{Chen23, LLT23, GNZ23}
under various assumptions of $p_{\scalebox{0.7}{data}}(\cdot)$, 
where the bounds are typically of form:
$$\underbrace{(\small \mbox{noise inaccuracy}) \cdot e^{-T}}_{\tiny \mbox{initialization error}} + \underbrace{(\small \mbox{score mismatch}) \cdot \texttt{Poly}(T)}_{\tiny \mbox{score error}}
+ \underbrace{\texttt{Poly}(\small \mbox{step size}) \cdot \texttt{Poly}(T)}_{\tiny \mbox{discretization error}},$$
with $\texttt{Poly}(\cdot)$ referring to polynomial in the variable. 
Our result gives a Wasserstein bound for CDPMs:
\begin{equation*}
\begin{aligned}
\underbrace{(\small \mbox{noise inaccuracy}) \cdot e^{-T}}_{\tiny \mbox{initialization error}}  + \underbrace{(\small \mbox{score mismatch}) \cdot (1 - e^{-T})}_{\tiny \mbox{score error}}
+ \underbrace{\texttt{Poly}(\small \mbox{step size})}_{\tiny \mbox{discretization error}}.
\end{aligned}
\end{equation*}
Score matching is often trained using blackbox function approximations,
and the errors incurred in this step may be large. 
So CDPMs are designed to be robust to score mismatch
and discretization,
at the cost of possible initialization bias.
\vskip 5 pt
{\bf Experiments}: We apply the proposed CDPMs to both synthetic and real data. In dimension one, we compare contractive OU with OU by adding a fixed noise to the true score function, which yields the same score matching error. Our result shows that contractive OU consistently beats OU, and is robust to different error levels and time discretization. 
We further compare the performance of different models via 
Wasserstein-2 distance of the SWISS Roll dataset and FIDs of MNIST,
which show that CDPMs outperform other SDE models. 
On the task of CIFAR-10 unconditional generation,
we obtain an FID score of 2.47 and an inception score of 10.18 for CDPM, 
which requires no retraining by transforming the pretrained weights of VE-SDE in \cite{Song20}, surpassing all other SDE models.
 
\smallskip
\noindent
{\bf Literature review}.
In the context of generative modeling, 
DPMs were initiated by \cite{Song19} (SMLD) and \cite{Ho20} (DDPM)
using forward-backward Markov chains.
The work \cite{Song20, Song21} unified the previous models 
via a score-based SDE framework,
which also led to deterministic ordinary differential equation (ODE) samplers.
Since then the field has exploded, 
and lots of work has been built upon DPMs and their variants.
Examples include 
DPMs in constrained domains \cite{FK23, LE23, RDD22, DS22},
DPMs on manifolds \cite{Pid22, DeB22, CH23},
DPMs in graphic models \cite{Mei23},
variational DPMs \cite{Kingma21, VK21}
and consistency models \cite{SDC23},
just to name a few.
Early theory \cite{DV21, DeB22, KW22} established the convergence 
of DPMs with exponential dependence on time horizon $T$
and dimension $d$.
Recently, polynomial convergence of various DPMs has been proved
for stochastic samplers \cite{Chen23, Ben23a, LW23, GNZ23, LLT22, LLT23}
and deterministic samplers \cite{Ben23b, Chen23b, Chen23c, LW23},
under suitable conditions on the target data distribution. 

\medskip
\quad The remainder of the paper is organized as follows.
In Section \ref{sc2}, we provide background on score-based DPMs.
Theoretical results for CDPMs are presented in Section \ref{sc3},
and connections to VE are discussed in Section \ref{sc35}.
Experiments are reported in Section \ref{sc4}.
We conclude with Section \ref{sc5}.

\section{Background on Score-based Diffusion Models}
\label{sc2}

\subsection{Forward and Backward SDEs.}
\label{sc21}
In this work, we explain the formulation of diffusion models based on the Stochastic Differential Equations (SDEs) presented by \cite{Song20}. 
Consider a fixed time horizon $T$, and an SDE-governed forward process $\{X_t\}_{t\in [0,T]}$ that describes how data distribution evolves as: 
\begin{equation}
\label{eq:SDE}
\mathrm{d} X_t = b(t, X_t) \mathrm{d}t + \sigma(t) \mathrm{d}B_t, \quad \mbox{with } X_0 \sim p_{\scalebox{0.7}{data}}(\cdot),
\end{equation}
where drift coefficient
$b: \mathbb{R}_+ \times \mathbb{R}^d \to \mathbb{R}^d$ and 
diffusion coefficient $\sigma: \mathbb{R}_+  \to \mathbb{R}_+$
are hyper-parameters as part of the design space of diffusion models.
Theorectically, some conditions on $b(\cdot, \cdot)$ and $\sigma(\cdot)$ are required
so that the SDE \eqref{eq:SDE} is at least well-defined (see \cite[Chapter 5]{KS91}, \cite[Section 3.1]{TZ23} for details).

We further denote $p(t, \cdot)$ be the probability density of $X_t$ (then $p(0,\cdot)=p_{\scalebox{0.7}{data}}(\cdot)$ is the data distribution).
Diffusion Models consider a time reversal $\overline{X}_t: = X_{T-t}$ for $0 \le t \le T$ of the original process $X_t$ for generative sampling. Existing literature on the time reversal of SDEs \cite{Ander82, HP86} have proved that $\overline{X}$ also satisfies an SDE (see proof in
Appendix A.), such that 
\begin{equation}
\label{eqn:SDE_reversal}
\mathrm{d}\overline{X}_t = \bar{b}(t, \overline{X}_t) \mathrm{d}t + \bar{\sigma}(t) \mathrm{d}\overline{B}_t, \quad \mbox{with } \overline{X}_0 \sim p(T, \cdot),
\end{equation}
in which the drift and diffusion coefficient yield a closed-form (we denote $\nabla$ the gradient w.r.p. $x$):
\begin{equation}
\label{eqn:SDE_reversal_coef}
\overline{\sigma}(t) = \sigma(T-t), \quad
\overline{b}(t,x) = -b(T-t, x) + \sigma^2(T-t)  \nabla \log p (T-t,x).
\end{equation}
The term $\nabla \log p(\cdot, \cdot)$ is also known as the Stein's score function. However, scores are unknown, and the procedures or methods to estimate scores are referred as score matching.

\subsection{Score Matching} To estimate the scores, recently score-based diffusion models \cite{Song19, Song20}
proposed to learn $\nabla \log p(t,x)$ by function approximations via deep neural networks. 
More precisely, 
denote by $\{s_\theta(t,x)\}_\theta$ a family of functions on $\mathbb{R}_+ \times \mathbb{R}^d$
parametrized by $\theta$, the scores are approximated by minimizing a weighted $L^2$ loss across time:
\begin{equation}
\label{eq:weighted score matching}
\min_\theta \tilde{\mathcal{J}}_{\text{ESM}}(\theta) = \mathbb{E}_{t \in \mathcal{U}(0, T)}\mathbb{E}_{p(t, \cdot)} \left[\lambda(t)|s_\theta(t,X) - \nabla \log p(t,X)|^2\right].
\end{equation}
where $\mathcal{U}(0, T)$ denotes a uniform distribution on $[0, T]$, and $\lambda: \mathbb{R} \rightarrow \mathbb{R}_{+}$ is a positive weighting function. 
\eqref{eq:weighted score matching} is also known as the {\em explicit score matching} (ESM) objective. For efficiently solving \eqref{eq:weighted score matching} in which scores $\nabla \log p(t,X)$ are still unknown, existing works in denoising diffusion models \cite{Song20,karras2022elucidating} solve an equivalent {\em denoising score matching} (DSM) objective
$$
\tilde{\mathcal{J}}_{\text{DSM}}(\theta)=\mathbb{E}_{t\sim\mathcal{U}(0, T)}\left\{\lambda(t) \mathbb{E}_{X_0\sim p_{data}(\cdot)} \mathbb{E}_{X_t \mid X_0}\left[\left|s_{\theta}(t,X(t))-\nabla_{X_t} \log p(t,X_t \mid X_0)\right|^2\right]\right\}.
$$
DSM objective has the advantage that the conditional distribution $p(t,X_t\mid X_0)$ is commonly tractable, and the scores yield a simple closed form when \eqref{eq:SDE} is a linear SDE (which is a common choice in literature), i.e. $b(t,x)=b(t)\cdot x$ for some function $b(t):\mathbb{R}_{+}\rightarrow \mathbb{R}$. We discuss this equivalence and other score matching methods such as sliced score matching \cite{Song20sl} in Appendix C.

\subsection{Designs of $b$ and $\sigma$.} After learning the scores by parameterization $s_{\theta^*}$, diffusion models can move backward in time from $p(T,\cdot)$ following \eqref{eqn:SDE_reversal}, and sample $\overline{X}_T  \sim p_{\scalebox{0.7}{data}}(\cdot)$ at time $T$. However, here the distribution $p(T, \cdot)$ still depends on the target distribution $p_{\scalebox{0.7}{data}}(\cdot)$ which is not tractable. A non-informative prior is needed to replace $p(T,\cdot)$ for true generative sampling. In addition to the linear SDE design mentioned earlier, two popular choices of diffusion models, VP SDE \cite{Ho20} and VE SDE \cite{Song20}, both ensure the existence of prior. We denote all the priors as $p_{\infty}$ for simplicity following this criterion. We also list the coefficients of popular SDEs (VP, subVP, VE), and the prior distributions in Table \ref{tab:SDEs coefficients} (More details are listed in Appendix B). Especially, VE and VP SDEs are the continuum limits of 
score matching with Langevin dynamics (SMLD) \cite{Song19}
and denoising diffusion probabilistic models (DDPMs) \cite{Ho20} respectively.

\begin{table}[!htbp]
\centering
\resizebox{\textwidth}{!}{
\begin{tabular}{lcccc}
\toprule 
\textbf{SDEs} & \textbf{$b(t)$} & \textbf{$\sigma(t)$}& $p_{\infty}$ & Remarks \\
\midrule
OU  & $ -\theta$ & $\sigma$ & $\mathcal{N}\left(0, I \right)$& $\theta,\sigma>0$\\
VP \cite{Ho20} & $-\frac{1}{2}\beta(t)$ & $\sqrt{\beta(t)}$& $\mathcal{N}(0,I)$ & $\beta(t)= \beta_{\min} + \frac{t}{T}(\beta_{\max} - \beta_{\min})$, with $\beta_{\min} \ll \beta_{\max}$ \\
subVP \cite{Song20} & $-\frac{1}{2}\beta(t)$ & $\sqrt{\beta(t) (1 - e^{-2 \int_0^t \beta(s) ds})}$& $\mathcal{N}(0,I)$ & $\beta(t)= \beta_{\min} + \frac{t}{T}(\beta_{\max} - \beta_{\min})$, with $\beta_{\min} \ll \beta_{\max}$\\
VE \cite{Song20} & $0$ & $\sigma_{\min }\left(\frac{\sigma_{\max }}{\sigma_{\min }}\right)^{\frac{t}{T}} \sqrt{\frac{2}{T} \log \frac{\sigma_{\max }}{\sigma_{\min }}}$ & $\mathcal{N}\left(0, \sigma_{\max }^2 I\right)$ & $\sigma_{\min} \ll  \sigma_{\max}$\\
\bottomrule
\end{tabular}
}
\caption{Design space of $b$ and $\sigma$ for different SDEs.}
\label{tab:SDEs coefficients} 
\end{table}

Finally, score-based diffusion models \cite{Song20} replace $\nabla p(t,x)$ with the matched score $s_{\theta^{*}}(t,x)$ in \eqref{eqn:SDE_reversal}, and also replace the marginal distribution at time $T$ by the non-informative prior $p_{\scalebox{0.7}{noise}}(\cdot)$ to get the backward process for generative sampling:
\begin{equation}
\label{eq:SDErevc}
\mathrm{d}\overline{X}_t = \left(-b(T-t, \overline{X}_t) + \sigma^2(T-t) \, s_{\theta^{*}}(T-t, \overline{X}_t) \right)\mathrm{d}t + \sigma(T-t) \mathrm{d}\overline{B}_t,
\,\, \overline{X}_0 \sim p_{\scalebox{0.7}{noise}}(\cdot).
\end{equation}

\section{Theory for contractive DPMs}
\label{sc3}

\quad In this section,
we formally introduce the idea of CDPMs, and present supportive theoretical results.
For a DPM specified by \eqref{eq:SDE}-\eqref{eq:SDErevc},
we consider the Euler-Maruyama discretization of its backward process $\overline{X}$.
Fix $\delta > 0$ as the step size,
and set $t_k: = k \delta$ for $k = 0, \ldots, N:= \frac{T}{\delta}$.
Let $\widehat{X}_0 = \overline{X}_0$, 
and 
\begin{equation}
\label{eq:EMdisc}
\begin{aligned}
\widehat{X}_k: = \widehat{X}_{k-1} + (-b(T - t_k, \widehat{X}_{k-1}) + 
& \sigma^2(T - t_{k-1}) s_\theta(T - t_{k-1}, \widehat{X}_{k-1})) \delta \\
& + \sigma(T-t_{k-1}) (B_{t_k} - B_{t_{k-1}}), 
\quad \mbox{for } k = 1,\ldots, N.
\end{aligned}
\end{equation}
Our goal is to bound the Wasserstein-2 distance $W_2(p_{\scalebox{0.7}{data}}(\cdot), \widehat{X}_N)$.
Clearly, 
\begin{equation}
\label{eq:triangle}
W_2(p_{\scalebox{0.7}{data}}(\cdot), \widehat{X}_N)
\le W_2(p_{\scalebox{0.7}{data}}(\cdot), \overline{X}_T)
+ \left(\mathbb{E}|\overline{X}_T - \widehat{X}_N|^2\right)^{\frac{1}{2}},
\end{equation}
where the first term on the right side of \eqref{eq:triangle} is
the sampling error at the continuous level,
and the second term is the discretization error.
We will study these two terms in the next two subsections.

\subsection{Sampling error in continuous time}
\label{sc31}

We are concerned with the term $W_2(p_{\scalebox{0.7}{data}}(\cdot), \overline{X}_T)$.
Existing work \cite{Chen23, LLT23, DV21} established $W_2$ bounds 
for OU processes under a bounded support assumption.
Closer to our result (and proof) is the concurrent work \cite{GNZ23}, 
where a $W_2$ bound is derived
for a class of DPMs with $b(t,x)$ separable in $t$ and $x$,
under a strongly log-concavity assumption.

\begin{assump}
\label{assump:sensitivity}
The following conditions hold:
\begin{enumerate}[itemsep = 3 pt]
\item
There exists $r_b: [0,T] \to \mathbb{R}$ such that 
$(x - x') \cdot (b(t,x) - b(t,x')) \ge r_b(t) |x - x'|^2$ for all $t$ and $x,x'$.
\item
There exists $L > 0$ such that $|\nabla \log p(t,x) - \nabla \log p(t,x')| \le L |x-x'|$ for all $t$ and $x,x'$.
\item
There exists $\varepsilon > 0$ such that 
$\mathbb{E}|s_{\theta}(t,\overline{X}_{T-t}) - \nabla \log p(t,\overline{X}_{T-t})|^2 \le \varepsilon^2$ for all $t$.
\end{enumerate}
\end{assump}

\quad The condition (1) assumes the monotonicity of $b(t, \cdot)$
and (2) assumes the Lipschitz property of the score functions. 
In the previous examples, 
$b(t,x)$ is linear in $x$ 
so the density $p(t, \cdot)$ is Gaussian-like,
and its score is almost affine.
Thus, it is reasonable to assume (2).
Conditions (1) and (2) 
are used to quantify how a perturbation of the model parameters in an SDE
affects its distribution. 
The condition (3) specifies how accurate Stein's score is estimated by 
a blackbox estimation. 
There has been work (e.g. \cite{CH23, Koe22, Oko23}) on the efficiency of score approximations.
So it is possible to replace the condition (3) with those score approximation bounds.

\begin{theorem}
\label{thm:sensitivity}
Let Assumption \ref{assump:sensitivity} hold, and any $h > 0$.
Define $\eta:=W_2(p(T, \cdot), p_{\scalebox{0.7}{noise}}(\cdot))$,
and
\begin{equation}
\label{eq:uv}
u(t):=\int_{T-t}^T \left(-2 r_b(s) + (2L + 2h) \sigma^2(s) \right)ds.
\end{equation}
Then we have 
\begin{equation}
\label{eq:sensitivity}
W_2(p_{\scalebox{0.7}{data}}(\cdot), \overline{X}_T) \le \sqrt{\eta^2 e^{u(T)} + \frac{\varepsilon^2}{2h} \int_0^T \sigma^2(t) e^{u(T)-u(T-t)} dt}.
\end{equation}
\end{theorem}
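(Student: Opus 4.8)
The plan is to prove \eqref{eq:sensitivity} by a synchronous coupling between the \emph{exact} time-reversal and the \emph{implementable} backward dynamics, followed by a Gr\"onwall estimate. Write $Y_t$ for the exact backward process solving \eqref{eqn:SDE_reversal}--\eqref{eqn:SDE_reversal_coef} with the true score and initialized at $Y_0 \sim p(T,\cdot)$, so that $Y_T \sim p_{\scalebox{0.7}{data}}(\cdot)$; and write $Z_t := \overline{X}_t$ for the process \eqref{eq:SDErevc} with the learned score $s_{\theta^*}$ and initialized at $Z_0 \sim p_{\scalebox{0.7}{noise}}(\cdot)$, so that $Z_T = \overline{X}_T$. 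I would drive both SDEs by the \emph{same} Brownian motion $\overline{B}$ and couple the initial laws optimally, giving $\mathbb{E}|Y_0 - Z_0|^2 = \eta^2$. Since the diffusion coefficient $\overline{\sigma}(t) = \sigma(T-t)$ is deterministic and shared, the noise increments cancel and the difference $\Delta_t := Y_t - Z_t$ solves a random ODE with no It\^o correction, so $\phi(t) := \mathbb{E}|\Delta_t|^2$ is differentiable.

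First I would differentiate to get $\phi'(t) = 2\,\mathbb{E}[\Delta_t \cdot (\text{drift}(t,Y_t) - \text{drift}(t,Z_t))]$, and then decompose the drift difference into three pieces: the $b$-increment $-(b(T-t,Y_t) - b(T-t,Z_t))$, the score-Lipschitz increment $\sigma^2(T-t)(\nabla\log p(T-t,Y_t) - \nabla\log p(T-t,Z_t))$, and the score-matching residual $\sigma^2(T-t)\,\xi_t$ with $\xi_t := \nabla\log p(T-t,Z_t) - s_{\theta^*}(T-t,Z_t)$. The crucial point is to split the score so the residual is evaluated at $Z_t$, the \emph{practical} trajectory, since that is exactly the quantity controlled by Assumption \ref{assump:sensitivity}(3). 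Pairing with $\Delta_t$ and applying (1) to the first piece gives $\le -r_b(T-t)|\Delta_t|^2$; applying Cauchy--Schwarz and (2) to the second gives $\le L\sigma^2(T-t)|\Delta_t|^2$; and bounding the third by Young's inequality with weight $h$ gives $\le h\sigma^2(T-t)|\Delta_t|^2 + \tfrac{\sigma^2(T-t)}{4h}|\xi_t|^2$, where $\mathbb{E}|\xi_t|^2 \le \varepsilon^2$ by (3). Collecting terms yields the linear differential inequality $\phi'(t) \le a(t)\phi(t) + \tfrac{\varepsilon^2}{2h}\sigma^2(T-t)$ with $a(t) = -2r_b(T-t) + (2L+2h)\sigma^2(T-t)$.

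Next I would integrate this via the variation-of-constants formula, obtaining $\phi(T) \le \phi(0)\,e^{\int_0^T a} + \tfrac{\varepsilon^2}{2h}\int_0^T \sigma^2(T-s)\,e^{\int_s^T a(\tau)\,d\tau}\,ds$. The substitution $w = T-\tau$ shows $\int_0^r a(\tau)\,d\tau = u(r)$ for every $r$, by the very definition \eqref{eq:uv} of $u$; hence $\int_0^T a = u(T)$ and $\int_s^T a = u(T) - u(s)$. A final change of variables $t = T-s$ in the outer integral then recasts the kernel as $e^{u(T)-u(T-t)}$, giving $\phi(T) \le \eta^2 e^{u(T)} + \tfrac{\varepsilon^2}{2h}\int_0^T \sigma^2(t)\,e^{u(T)-u(T-t)}\,dt$. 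Because $Y_T \sim p_{\scalebox{0.7}{data}}(\cdot)$ and $Z_T = \overline{X}_T$ are coupled, $W_2(p_{\scalebox{0.7}{data}}(\cdot),\overline{X}_T)^2 \le \phi(T)$, and taking square roots yields \eqref{eq:sensitivity}.

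The main obstacle I anticipate is bookkeeping rather than any deep idea. One must be careful that Assumption \ref{assump:sensitivity}(3) measures the score error along the \emph{approximate} process $\overline{X}$, which dictates precisely where the score difference is split (at $Z_t$, not $Y_t$) so that (2) and (3) together close the estimate; and the two substitutions $w = T-\tau$ and $t = T-s$ must be tracked with care so that the integrating kernel lands in the exact form $e^{u(T)-u(T-t)}$. A secondary point worth verifying explicitly is the cancellation of the stochastic integrals under synchronous coupling, which relies on $\overline{\sigma}$ being deterministic and common to both processes, ensuring $\phi$ is genuinely differentiable so that the Gr\"onwall step carries no martingale remainder.
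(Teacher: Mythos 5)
Your proposal is correct and follows essentially the same route as the paper's proof in Appendix D: a synchronous coupling of the exact and learned-score backward SDEs with optimally coupled initial laws, the same three-way drift decomposition with the score-matching residual evaluated along the practical trajectory, Young's inequality with weight $h$, and Gr\"onwall/variation-of-constants yielding the kernel $e^{u(T)-u(T-t)}$. No substantive differences to report.
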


\quad The proof of Theorem \ref{thm:sensitivity} is deferred to Appendix D.
A similar result was given in \cite{KW22} 
under an unconventional assumption that
the score matching functions $s_\theta(t,x)$,
rather than the score functions $\nabla p(t,x)$,
are Lipschitz.
In fact,
the (impractical) assumption that 
the score matching functions are Lipschitz
is not needed at the continuous level,
and can be replaced with the Lipschitz condition on the score functions.
On the other hand,
the Lipschitz property of the score matching functions
are required, for technical purposes,
to bound the discretization error in Section \ref{sc32}.

\quad It is possible to establish sharper bounds under extra (structural) conditions on $(b(\cdot, \cdot), \sigma(\cdot))$, 
and also specify the dependence in dimension $d$ (e.g. \cite{Chen23, GNZ23}).
For instance, if we assume $b(t,x)$ is separable in $t$ and $x$ and linear in $x$,
and $p_{\scalebox{0.7}{data}}(\cdot)$ is strongly log-concave,
then the term $L \sigma^2(s)$ in \eqref{eq:uv} will become $-L' \sigma^2(s)$ for some $L' > 0$. The dependence of $L$ on the $b$ and $\sigma$ can also be readily relaxed given additional assumptions on the data distribution,we leave the full investigation of its theory to the future work.

\quad Now we explain contractive DPMs. 
Revisiting bound \eqref{eq:sensitivity}, 
the sampling error $W_2(p_{\scalebox{0.7}{data}}(\cdot), \overline{X}_T)$
is linear in the score matching error $\varepsilon$
and the initialization error $\eta$,
and these errors may be amplified in time $T$
-- in most aforementioned DPMs, 
$r_b(t) \le 0$ so $u(t)$ is positive and at least linear.
As mentioned earlier,
it is problematic 
if we don't know how good a blackbox score matching $s_\theta(t,x)$ is.
Our idea is simply
 to make $u(t)$ be negative, that is to set $r_b(t) > 0$ sufficiently large, 
in order to prevent the score matching error from propagating in backward sampling.
This yields the class of CDPMs,
which is inherently different from existing DPMs in the sense that
these DPMs often have contractive forward processes,
while our proposal requires contractive backward processes.
Quantitatively, we can set for some $\alpha > 0$,
\begin{equation}
\label{eq:contra1}
\textbf{contractive cond.} \quad \inf_{0 \le t \le T} \left(r_b(t) - (L + h) \sigma^2(t) \right) \ge \alpha.
\end{equation} 
In practice, it suffices to design $(b(\cdot, \cdot), \sigma(\cdot))$ with a positive $r_b(t)$ for ant $t\in [\epsilon,T]$, which we refer as a practical condition (actually a necessary condition) of contractiveness. We formally define {\it CDPMs} as diffusion models governed by SDEs that {\it satisfy this practical version of contractiveness}, i.e. $r_b(t)> 0$. Notice that when $b(t,x)=b(t)\cdot x$, this is equivalent to $b(t)>0$.
We present three examples, contractive OU processes, contractive VP SDEs, and contractive subVP SDEs in Table \ref{tab:Contractive SDEs coefficients} :

\begin{table}[!htbp]
\centering
\resizebox{\textwidth}{!}{
\begin{tabular}{lcccc}
\toprule 
\textbf{SDEs} & \textbf{$b(t)$} & \textbf{$\sigma(t)$}& $p_{\infty}$ & Remarks \\
\midrule 
COU  & $ \theta$ & $\sigma$ & $\mathcal{N}\left(0,  
\frac{\sigma^2}{2\theta} ( e^{2 \theta T} -1) I \right)$& $\theta>0$\\
CVP  & $\frac{1}{2}\beta(t)$ & $\sqrt{\beta(t)}$& $\mathcal{N}\left( 0, 
(e^{\frac{T}{2} (\beta_{\max} + \beta_{\min})}-1 ) I \right)$ & $\beta(t)= \beta_{\min} + \frac{t}{T}(\beta_{\max} - \beta_{\min})$, with $\beta_{\min} \ll \beta_{\max}$ \\
CsubVP  & $\frac{1}{2}\beta(t)$ & $\sqrt{\beta(t) (e^{2 \int_0^t \beta(s) ds}-1)}$& $\mathcal{N}\left( 0, 
(e^{\frac{T}{2} (\beta_{\max} + \beta_{\min})}-1 )^2 I \right)$ & $\beta(t)= \beta_{\min} + \frac{t}{T}(\beta_{\max} - \beta_{\min})$, with $\beta_{\min} \ll \beta_{\max}$\\

\bottomrule
\end{tabular}
}
\caption{Design space of $b$ and $\sigma$ for CDPMs.}
\label{tab:Contractive SDEs coefficients} 
\end{table}

\quad To illustrate the benefits of contractive-ness, we further give a bound for CVP. As in Table \ref{tab:Contractive SDEs coefficients}, the backward process of CVP SDE is:
\begin{equation}
\label{eq:CVP}
\begin{aligned}
d \overline{X}_t  = & \left(-\frac{1}{2} \beta(T-t) \overline{X}_t  +\beta(T-t) \nabla \log p(T-t, \overline{X}_t)\right)  dt \\
& \qquad \qquad \qquad+ \sqrt{\beta(T-t)} dB_t, \quad 
\overline{X}_0 \sim \mathcal{N}\left( 0, 
(e^{\frac{T}{2} (\beta_{\max} + \beta_{\min})}-1 ) I \right).
\end{aligned}
\end{equation}
Recall that a function $\ell: \mathbb{R}^d \to \mathbb{R}$ is $\kappa$-strongly concave if $(\nabla \ell(x) - \nabla \ell(y)) \cdot (x - y) \le - \kappa|x-y|^2$.
\begin{theorem}
\label{thm:CVPW2}
Let $(\overline{X}_t, \, 0 \le t \le T)$ be specified by \eqref{eq:CVP} (the backward process of CVP).
Assume that 
$\log p_{\scalebox{0.7}{data}}(\cdot)$ is $\kappa$-strongly log-concave,
and $\mathbb{E}_{p_{\scalebox{0.7}{data}}(\cdot)}|x|^2 < \infty$.
For $h<\min(\frac{1}{2},\frac{1}{\beta_{\text{max}}T}\frac{\kappa}{1+\kappa})$, we have:
\begin{equation}
\label{eq:sensitivity3}
W^2_2(p_{\scalebox{0.7}{data}}(\cdot), \overline{X}_T) \le 
e^{-2\left(\frac{\kappa}{1+\kappa} - \beta_{\max} hT + \mathcal{O}(e^{-\beta_{\min} T})  \right)}\mathbb{E}_{p_{\tiny \mbox{data}}(\cdot)}|x|^2 + \frac{\varepsilon^2}{2h (1-2h)}.
\end{equation}
\end{theorem}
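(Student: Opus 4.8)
The plan is to read Theorem \ref{thm:CVPW2} as a fully explicit, sharpened instance of the synchronous-coupling estimate that underlies Theorem \ref{thm:sensitivity}, specialized to the CVP coefficients $r_b(\tau)=\tfrac12\beta(\tau)$ and $\sigma^2(\tau)=\beta(\tau)$. First I would couple two copies of the reverse dynamics driven by the \emph{same} Brownian motion: the exact reverse process $Y_t$ started at the true marginal $p(T,\cdot)$, which satisfies $Y_T\sim p_{\scalebox{0.7}{data}}(\cdot)$ by the time-reversal identity \eqref{eqn:SDE_reversal}, and the sampler $\overline X_t$ of \eqref{eq:CVP}, started at the Gaussian prior and run with the learned score $s_\theta$ substituted for $\nabla\log p$. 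With $e_t:=Y_t-\overline X_t$ the Brownian parts cancel, and estimating $\tfrac{d}{dt}\mathbb E|e_t|^2$ term by term---monotonicity of the drift via Assumption \ref{assump:sensitivity}(1), the true-score increment, and Young's inequality with parameter $h$ on the score-matching residual bounded by $\varepsilon^2$ through Assumption \ref{assump:sensitivity}(3)---produces a linear differential inequality $\tfrac{d}{dt}\mathbb E|e_t|^2\le c(t)\,\mathbb E|e_t|^2+\tfrac{\sigma^2(T-t)}{2h}\varepsilon^2$. Since $W_2^2(p_{\scalebox{0.7}{data}}(\cdot),\overline X_T)\le\mathbb E|e_T|^2$, Gr\"onwall's inequality then reduces everything to computing two integrals.

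\textbf{The key step.} The decisive departure from Theorem \ref{thm:sensitivity} is to replace the worst-case Lipschitz bound $+L\sigma^2(\tau)$ in the coefficient by the contraction $-\kappa(\tau)\sigma^2(\tau)$, where $\kappa(\tau)$ is the strong-log-concavity modulus of the forward marginal $p(\tau,\cdot)$. The CVP forward SDE is linear, so $X_\tau\stackrel{d}{=}e^{A(\tau)}X_0+\sqrt{e^{2A(\tau)}-1}\,Z$ with $A(\tau):=\tfrac12\int_0^\tau\beta$ and $Z\sim\mathcal N(0,I)$ independent of $X_0$; thus $p(\tau,\cdot)$ is the law of a scaled copy of $p_{\scalebox{0.7}{data}}(\cdot)$ convolved with a Gaussian. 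I would then invoke preservation of strong log-concavity under scaling and Gaussian convolution to get $\kappa(\tau)=\kappa\big(e^{2A(\tau)}(1+\kappa)-\kappa\big)^{-1}\ge\tfrac{\kappa}{1+\kappa}e^{-2A(\tau)}$. This lemma is where I expect the real work to sit: one must justify that the modulus of the convolution is at least the harmonic-type combination $\tfrac{mm'}{m+m'}$ of the two factors (with $m'=(e^{2A(\tau)}-1)^{-1}$ the Gaussian modulus), and handle the endpoint $\tau\to0$ where the Gaussian smoothing degenerates but the input $p_{\scalebox{0.7}{data}}(\cdot)$ is already $\kappa$-strongly log-concave. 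Substituting $r_b=\tfrac12\beta$ and $\sigma^2=\beta$ then gives $c(t)=-\beta(T-t)\big(1+2\kappa(T-t)-2h\big)$.

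\textbf{Assembling the two terms.} For the initialization term I would bound $\eta^2=W_2^2\big(p(T,\cdot),\mathcal N(0,(e^{2A(T)}-1)I)\big)$ by the synchronous Gaussian coupling $\eta^2\le e^{2A(T)}\,\mathbb E_{p_{\scalebox{0.7}{data}}(\cdot)}|x|^2$ (match the two $Z$'s and keep $X_0$), and combine it with $\int_0^T c$. Using $\int_0^T\beta=2A(T)=:\Lambda$, the antiderivative $\tfrac{d}{d\tau}e^{-2A(\tau)}=-\beta(\tau)e^{-2A(\tau)}$, and the lower bound on $\kappa(\tau)$ to get $\int_0^T\beta(\tau)\kappa(\tau)\,d\tau\ge\tfrac{\kappa}{1+\kappa}(1-e^{-\Lambda})$, the initialization exponent collapses to $e^{2h\Lambda-2\int\beta\kappa}\le e^{-2(\frac{\kappa}{1+\kappa}-\beta_{\max}hT+\mathcal O(e^{-\beta_{\min}T}))}$, where I also used $\Lambda=\tfrac{T(\beta_{\max}+\beta_{\min})}{2}\le\beta_{\max}T$ and $e^{-\Lambda}\le e^{-\beta_{\min}T}$. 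For the score term I would discard the favorable $-2\kappa\sigma^2$ contribution to get $e^{\int_t^Tc}\le e^{-(1-2h)\int_0^{T-t}\beta}$, and evaluate $\tfrac{\varepsilon^2}{2h}\int_0^T\beta(s)e^{-(1-2h)\int_0^s\beta}\,ds$ by recognizing the integrand as an exact derivative, yielding $\tfrac{\varepsilon^2}{2h}\cdot\tfrac{1}{1-2h}\big(1-e^{-(1-2h)\Lambda}\big)\le\tfrac{\varepsilon^2}{2h(1-2h)}$. The two hypotheses on $h$ enter exactly here: $h<\tfrac12$ keeps $1-2h>0$ so the score integral is finite, while $h<\tfrac{1}{\beta_{\max}T}\tfrac{\kappa}{1+\kappa}$ forces $\tfrac{\kappa}{1+\kappa}-\beta_{\max}hT>0$, so the initialization factor is genuinely contractive (below one).
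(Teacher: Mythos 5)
Your proposal is correct and follows essentially the same route as the paper's Appendix E: specialize the synchronous-coupling/Gr\"onwall argument of Theorem \ref{thm:sensitivity} to CVP, replace the Lipschitz constant $L$ by the time-dependent strong-log-concavity modulus of the forward marginal $p(\tau,\cdot)$, and then estimate the initialization and score integrals exactly as you do. The only differences are cosmetic: the paper cites \cite[Proposition 10]{GNZ23} for the modulus (whose expression simplifies algebraically to your $\kappa\left(e^{2A(\tau)}(1+\kappa)-\kappa\right)^{-1}$) rather than deriving it from the Gaussian-channel representation, and it bounds the score integral by the cruder estimate $\beta(t)\le\beta_{\max}$ instead of your exact-derivative evaluation, both yielding the same $\tfrac{\varepsilon^2}{2h(1-2h)}$.
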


\quad The proof of Theorem \ref{thm:CVPW2} is given in Appendix E.
It is easy to see from the theorem that CVP (which follows similarly for other CDPMs) 
allow to control the score matching error $\varepsilon$,
at the possible cost of initialization error coming from $\eta$.
Note that if $\beta_{\max} T$ is asymptotically small, 
this error is bounded.
This requires scaling the hyperparameters with respect to $T$ in the model.
We observe in the experiment that tuning a moderate level $\beta$ is important to let CDPM benefit from contraction while not suffer from the initialization error.
Also note that it is not necessary to send $T \to \infty$, and 
$T = 1$ is taken in \cite{Song20}.

\subsection{Discretization error}
\label{sc32}

We study the discretization error $\left(\mathbb{E}|\overline{X}_T - \widehat{X}_N|^2\right)^{\frac{1}{2}}$
for CDPMs.
Classical SDE theory \cite{KP92} indicates that $\left(\mathbb{E}|\overline{X}_T - \widehat{X}_N|^2\right)^{\frac{1}{2}} \le C(T) \delta$,
with the constant $C(T)$ exponential in $T$.
Here we show that by a proper choice of the pair $(b(\cdot, \cdot), \sigma(\cdot))$ leading to CDPMs, 
the constant $C(T)$ can be made independent of $T$.
In other words, the discretization error will not expand over the time.
We need some technical assumptions. 
\begin{assump}
\label{assump:2dis}
The following conditions hold:
\begin{enumerate}[itemsep = 3 pt]
\item 
There exists $L_\sigma > 0$ such that $|\sigma(t) - \sigma(t')| \le L_\sigma |t-t'|$ for all $t, t'$.
\item 
There exists $R_\sigma > 0$ such that $\sigma(t) \le R_\sigma$ for all $t$. 
\item 
There exists $L_b > 0$ such that $|b(t,x) - b(t',x')| \le L_b  (|t-t'| + |x - x'|)$ for all $t,t'$ and $x, x'$.
\item 
There exists $L_s > 0$ such that $|s_\theta(t,x) - s_\theta(t',x')| \le L_s (|t -t'| + |x- x'|)$ for all $t,t'$ and $x,x'$.
\item 
There exists $R_s > 0$ such that $|s_\theta(T,x)| \le R_s (1 + |x|)$ for all $x$.
\end{enumerate}
\end{assump}

\quad Next we introduce a contractive assumption that is consistent with \eqref{eq:contra1}.

\begin{assump}
\label{assump:contraction}
There exists $\beta > 0$ such that
\begin{equation}
\label{eq:condcontraction}
\int_{T-t}^T (r_b(s) - L_s \sigma^2(s))\, ds \ge \beta t, \quad \mbox{for all }t,
\end{equation}
or simply 
\begin{equation}
\label{eq:contral2}
\beta: = \inf_{0 \le t \le T} \left(r_b(t) - L_s \sigma^2(t)\right) > 0.
\end{equation}
\end{assump}

\begin{theorem}
\label{thm:globalerr}
Let Assumptions \ref{assump:sensitivity}, \ref{assump:2dis} and \ref{assump:contraction} hold.
Then there exists $C > 0$ (independent of $\delta, T$) such that
for $\delta >0$ sufficiently small,
\begin{equation}
\label{eq:globalerr}
\left(\mathbb{E}|\overline{X}_T - \widehat{X}_N|^2 \right)^{\frac{1}{2}} \le C \sqrt{\delta}.
\end{equation}
\end{theorem}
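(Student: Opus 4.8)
The plan is to bound the strong error of the Euler--Maruyama scheme by an Itô (energy) estimate in which the contraction from Assumption \ref{assump:contraction} replaces the usual Grönwall exponential blow-up by a $T$-uniform bound. First I would introduce the continuous-time interpolation $\widehat{X}_t$ of the scheme \eqref{eq:EMdisc}, obtained by freezing the drift and diffusion coefficients at the grid points of each interval $[t_{k-1},t_k)$, so that $\widehat{X}_{t_k}=\widehat{X}_k$ and $\widehat{X}_t$ solves an SDE driven by the \emph{same} Brownian motion as the continuous backward process $\overline{X}_t$ of \eqref{eq:SDErevc}. Writing the common backward drift as $f(t,x):=-b(T-t,x)+\sigma^2(T-t)\,s_\theta(T-t,x)$, the error $e_t:=\overline{X}_t-\widehat{X}_t$ then satisfies an SDE whose diffusion part is $(\sigma(T-t)-\sigma(T-t_{k-1}))\,\mathrm{d}\overline{B}_t$.

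Before the energy estimate I would record two ingredients that must be tracked carefully for $T$-uniformity. The first is a pair of second-moment bounds $\sup_{t}\mathbb{E}|\overline{X}_t|^2$ and $\sup_k\mathbb{E}|\widehat{X}_k|^2$ independent of $T$; these follow from the dissipativity of the backward drift (the $-b$ term is contractive by Assumption \ref{assump:sensitivity}(1) and the score term is controlled by Assumption \ref{assump:2dis}), which yields a one-step recursion $\mathbb{E}|\widehat{X}_k|^2\le(1-2\beta\delta+\mathcal{O}(\delta^2))\,\mathbb{E}|\widehat{X}_{k-1}|^2+\mathcal{O}(\delta)$ and hence a uniform bound for $\delta$ small. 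The second is the one-step displacement estimate $\mathbb{E}|\widehat{X}_t-\widehat{X}_{t_{k-1}}|^2\le C\delta$ for $t\in[t_{k-1},t_k)$, coming from the $\mathcal{O}(\delta^2)$ frozen-drift contribution and the $\mathcal{O}(\delta)$ Brownian increment, using the moment bound and $\sigma\le R_\sigma$ from Assumption \ref{assump:2dis}(2).

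Applying Itô's formula to $|e_t|^2$ and taking expectations removes the martingale term and gives
\begin{equation*}
\frac{d}{dt}\mathbb{E}|e_t|^2 = 2\,\mathbb{E}\big[e_t\cdot(f(t,\overline{X}_t)-f_{\mathrm{froz}}(t))\big] + \mathbb{E}\big|\sigma(T-t)-\sigma(T-t_{k-1})\big|^2,
\end{equation*}
where $f_{\mathrm{froz}}$ is the frozen drift. I would split the drift difference as $[f(t,\overline{X}_t)-f(t,\widehat{X}_t)]+[f(t,\widehat{X}_t)-f_{\mathrm{froz}}(t)]$. The first bracket is exactly the contractive part: by Assumption \ref{assump:sensitivity}(1) and the Lipschitz bound \ref{assump:2dis}(4) one has $(x-x')\cdot(f(t,x)-f(t,x'))\le-(r_b(T-t)-L_s\sigma^2(T-t))|x-x'|^2\le-\beta|x-x'|^2$ by \eqref{eq:contral2}, so it contributes $\le-2\beta\,\mathbb{E}|e_t|^2$. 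The second bracket is the discretization forcing; using the time- and space-Lipschitz estimates of Assumption \ref{assump:2dis}, the bound $\sigma\le R_\sigma$, and the one-step displacement estimate, I would show $\mathbb{E}|f(t,\widehat{X}_t)-f_{\mathrm{froz}}(t)|^2\le C\delta$, after which Young's inequality gives $2\,\mathbb{E}[e_t\cdot(\cdots)]\le\epsilon\,\mathbb{E}|e_t|^2+\epsilon^{-1}C\delta$. Taking $\epsilon=\beta$ and using that the diffusion term is $\mathcal{O}(\delta^2)$ produces $\frac{d}{dt}\mathbb{E}|e_t|^2\le-\beta\,\mathbb{E}|e_t|^2+C'\delta$.

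Integrating this linear differential inequality with the negative coefficient $-\beta$ — the step where contraction pays off — gives $\mathbb{E}|e_t|^2\le\frac{C'\delta}{\beta}(1-e^{-\beta t})\le\frac{C'\delta}{\beta}$ uniformly in $t$ and $T$ (since $e_0=0$), and evaluating at $t=T$ yields $(\mathbb{E}|\overline{X}_T-\widehat{X}_N|^2)^{1/2}\le C\sqrt{\delta}$ with $C$ independent of $T$. I expect the main obstacle to be the bookkeeping needed to keep every constant independent of $T$: the naive bound $|s_\theta(t,x)|\le R_s(1+|x|)+L_s(T-t)$ obtained by combining Assumptions \ref{assump:2dis}(4)--(5) introduces a spurious factor growing in $T$, so the moment and one-step estimates must be arranged to avoid it, for instance by exploiting the dissipativity of $-b$ directly in the recursion rather than bounding $|s_\theta|$ crudely. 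Confirming that the frozen-drift forcing is genuinely $\mathcal{O}(\delta)$, with a $T$-free prefactor, is the crux, and it is precisely Assumption \ref{assump:contraction} that closes this gap.
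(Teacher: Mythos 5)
Your proposal is correct in outline but takes a genuinely different route from the paper. You work with a continuous-time interpolation of the Euler scheme and run an It\^o energy estimate on $\mathbb{E}|\overline{X}_t-\widehat{X}_t|^2$, absorbing the frozen-coefficient forcing (which is $O(\sqrt{\delta})$ in $L^2$ because of the intra-step Brownian displacement) into the $-2\beta\,\mathbb{E}|e_t|^2$ contraction term via Young's inequality, and then integrate the resulting linear differential inequality with negative rate. The paper instead stays discrete: it proves a contraction lemma for the exact backward flow (Lemma \ref{lem:contraction}), a local one-step error lemma giving $O(\delta^{3/2})$ bounds both in $L^2$ and in conditional mean (Lemma \ref{lem:localest}), and assembles the global error through the recursion $e_{k+1}^2\le e_k^2\left(1-\tfrac{1}{4}\beta\delta\right)+D\delta^2$, handling the cross term by the tower property --- the classical local-to-global (Milstein--Tretyakov) scheme. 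Both arguments rest on the same two pillars (the contraction in Assumption \ref{assump:contraction} and $T$-uniform second moments) and both deliver the rate $\sqrt{\delta}$; the paper's decomposition is better positioned to upgrade to rate $\delta$ (as its closing remark notes, one only needs to sharpen the conditional-mean local error to $\delta^{2}$), whereas your energy estimate would have to be reorganized to exploit the vanishing conditional mean of the intra-step displacement to reach order one. The $T$-uniformity caveat you flag --- that the linear-growth bound on $s_\theta$ is only assumed at $t=T$, so crude bounds on $|s_\theta(t,\cdot)|$, on $|b(t,0)|$, and hence on the moment recursion pick up factors growing with $T$ --- is a real bookkeeping concern, but it is shared by the paper's own proof (which invokes a ``standard argument'' for $\sup_t\mathbb{E}|\overline{X}_t|^2\le C$ and does not track the $T$-dependence of the constants in Lemma \ref{lem:localest}), so it is not a gap specific to your approach.
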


\quad The proof of Theorem \ref{thm:globalerr} is given in Appendix F. 


\section{Connections between CDPM and VE}
\label{sc35}
\quad In this section, we draw connections between CDPM and VE.
We first show that VE exhibits some hidden contractive property. 
Then we show how to exploit the pretrained models such as VE 
to achieve CDPM, which does not require retraining. 

\subsection{VE is contractive at earlier denoising steps}
We illustrate with an example that the backward process of VE
also yields the contractive property at earlier stages of the denoising process. 
In Figure \ref{fig:Contractive Illustration}, the angles correspond to the scores of a normal distribution with mean 0 (the case of the VE prior).
We see that the two points becomes closer after a denosing step ,
which provides an explanation of the hidden contraction. 
However, VE may lose this contractive property 
when the distribution is close to the target data distribution. The score matching error and discretization error near $t\approx T-\epsilon$ in the backward process indeed plays a large impact as in the empirical studies \cite{karras2022elucidating}, which motivates the design of CDPM.

\begin{minipage}{\textwidth}
\vspace{3 pt}
\begin{minipage}[b]{0.45\textwidth}
    \centering
    \includegraphics[width=0.95\linewidth]{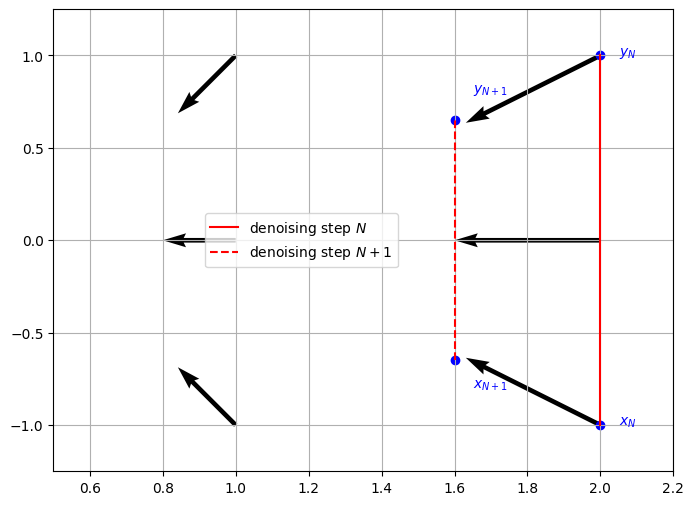}
    \captionof{figure}{Contraction of VE}
    \label{fig:Contractive Illustration}
  \end{minipage}
\hfill
\begin{minipage}[b]{0.54\textwidth}
    \small
    \centering
\includegraphics[width=0.95\linewidth]{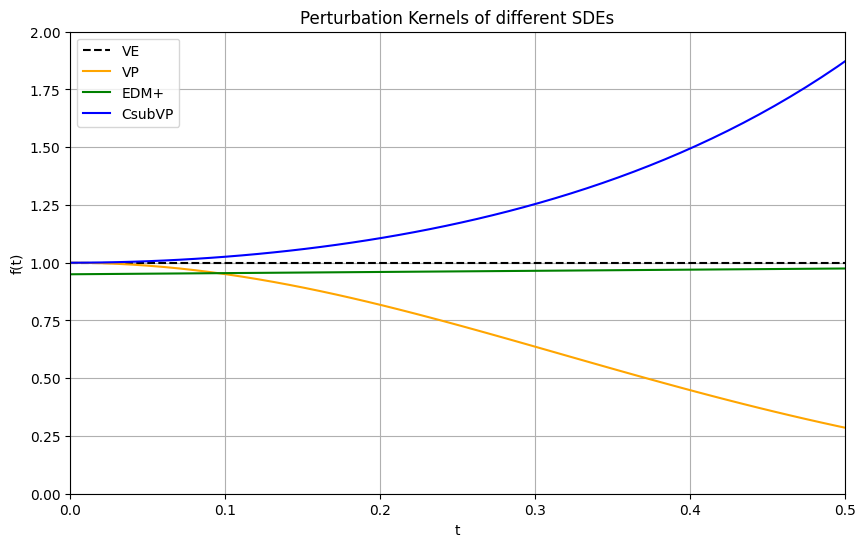}
\captionof{figure}{Perturbation kernels}
\label{fig:perturbation kernels of different SDEs}
\end{minipage}
\vspace{2 pt}
\end{minipage}

\subsection{CDPM can be transformed from VE}
We show that CDPM, though different from existing DPMs,
can be derived from VE via a time/space change of variable, 
so does not require pretraining the score matching objectives again. 
Note that for COU, CVP and CsubVP, 
the parameter $b(t,x)$ is separate in $t$ and $x$.
We follow \cite{karras2022elucidating},
and define the perturbation kernel of CDPM as:
\begin{equation*}
X_t \mid X_0\sim \mathcal{N}\left(f(t) X_0,\, f(t)^2 g(t)^2 I\right),
\end{equation*}
where 
\begin{equation}
f(t)=e^{\frac{t^2}{4T}(\beta_{\max} - \beta_{\min}) + \frac{t}{2} \beta_{\min}} \quad \text{and} \quad g(t)=f(t)-f^{-1}(t).
\end{equation}
Figure \ref{fig:perturbation kernels of different SDEs} plots
different perturbation kernels of SDE models:
existing models lead to either a constant or a decreasing kernel,
while we propose the kernel be increasing.
This yields CsubVP and EDM+, 
which we will show in Section \ref{sc43}.

\quad Denote by $p_{\scalebox{0.7}{VE}}(t,\cdot)$ and $p_{\scalebox{0.7}{CsubVP}}(t,\cdot)$ 
the probability distribution of $X_t$ following VE and CsubVP respectively.
Assume that we have access to a pretrained VE score matching
$s_{pre}(t,x)\approx \nabla \log p_{VE}(t,x)$.
We can then compute the CDPM score by the following transformation,
which is read from \cite[Equation (12) and (19)]{karras2022elucidating}.

\begin{theorem}
Assume that $\sigma^2_{\max}-\sigma^2_{\min}>g^2(T)$.
We have for $t\in [0,T]$, 
\begin{equation}
p_{\scalebox{0.7}{CsubVP}}(t,x) = f(t)^{-d} p_{\scalebox{0.7}{VE}}(\tau(t),x/f(t)),
\end{equation}
where
\begin{equation}
\tau(t)=\frac{T}{2}\frac{\log(1+\frac{g^2(t)}{\sigma^2_{\min}})}{\log(\sigma_{\max}/\sigma_{\min})}.
\end{equation}
\end{theorem}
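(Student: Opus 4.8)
The plan is to exploit the fact that both the VE and CsubVP forward processes are started from the \emph{same} data law $p_{\scalebox{0.7}{data}}(\cdot)$ and have Gaussian perturbation kernels, so that both marginals are explicit Gaussian convolutions of $p_{\scalebox{0.7}{data}}$; matching these two convolutions then reduces the claim to a change of variables in space together with a reparametrization of the noise level. Concretely, I would first write, using the defined CsubVP kernel,
$$p_{\scalebox{0.7}{CsubVP}}(t,x) = \int p_{\scalebox{0.7}{data}}(x_0)\,\mathcal{N}\bigl(x;\, f(t)x_0,\, f(t)^2 g(t)^2 I\bigr)\,dx_0,$$
and likewise, since VE has $b\equiv 0$,
$$p_{\scalebox{0.7}{VE}}(s,x) = \int p_{\scalebox{0.7}{data}}(x_0)\,\mathcal{N}\bigl(x;\, x_0,\, \sigma_{\mathrm{VE}}^2(s) I\bigr)\,dx_0, \qquad \sigma_{\mathrm{VE}}^2(s) := \int_0^s \sigma^2(u)\,du.$$

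The key algebraic tool is the Gaussian scaling identity $c^{-d}\,\mathcal{N}(x/c;\, x_0, \rho^2 I) = \mathcal{N}(x;\, c\,x_0,\, c^2\rho^2 I)$, which follows by substituting $|x/c - x_0|^2 = c^{-2}|x - c x_0|^2$ and cancelling the Jacobian factor $c^{d}$ against the normalizing constant. Applying it with $c = f(t)$ and $\rho^2 = \sigma_{\mathrm{VE}}^2(\tau(t))$ inside the VE convolution gives
$$f(t)^{-d}\, p_{\scalebox{0.7}{VE}}\bigl(\tau(t),\, x/f(t)\bigr) = \int p_{\scalebox{0.7}{data}}(x_0)\,\mathcal{N}\bigl(x;\, f(t)x_0,\, f(t)^2 \sigma_{\mathrm{VE}}^2(\tau(t)) I\bigr)\,dx_0.$$
Comparing with $p_{\scalebox{0.7}{CsubVP}}(t,x)$, the two integrands coincide for every $x_0$ \emph{precisely} when the noise levels match, i.e. when $\sigma_{\mathrm{VE}}^2(\tau(t)) = g^2(t)$; this single scalar equation is what pins down $\tau(t)$.

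It then remains to evaluate $\sigma_{\mathrm{VE}}^2$ and invert the matching relation. Using $\sigma(u)$ from Table \ref{tab:SDEs coefficients}, a direct integration yields $\sigma_{\mathrm{VE}}^2(s) = \sigma_{\min}^2\bigl((\sigma_{\max}/\sigma_{\min})^{2s/T} - 1\bigr)$; setting this equal to $g^2(t)$ and solving for $s=\tau(t)$ reproduces exactly the stated formula. Finally I would consume the hypothesis $\sigma_{\max}^2 - \sigma_{\min}^2 > g^2(T)$: since $\sigma_{\mathrm{VE}}^2$ is strictly increasing with range $[0,\,\sigma_{\max}^2 - \sigma_{\min}^2]$ over $s\in[0,T]$, while $g^2$ is increasing with $g(0)=0$, this condition guarantees $\tau(t)\in[0,T)$ for all $t\in[0,T]$, so that $p_{\scalebox{0.7}{VE}}(\tau(t),\cdot)$ is genuinely well-defined. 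The step I expect to be most delicate is not any single computation but verifying that the noise-level matching produces an \emph{admissible} (monotone and in-range) time change $\tau$ --- which is exactly the point at which the standing assumption is needed --- whereas the remaining steps amount to careful bookkeeping of the Gaussian convolution structure.
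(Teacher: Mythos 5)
The paper itself offers no proof of this theorem: it simply states that the transformation is ``read from'' Equations (12) and (19) of \cite{karras2022elucidating}, which encode the general fact that any diffusion with perturbation kernel $\mathcal{N}(f(t)X_0, f(t)^2 g(t)^2 I)$ has marginals that are scaled, time-changed versions of a reference (VE-type) family. Your argument is a correct, self-contained derivation of exactly that fact in the case at hand: writing both marginals as Gaussian convolutions of $p_{\scalebox{0.7}{data}}$, applying the scaling identity $c^{-d}\mathcal{N}(x/c;x_0,\rho^2 I)=\mathcal{N}(x;cx_0,c^2\rho^2 I)$ with $c=f(t)$, and matching noise levels via $\sigma^2_{\mathrm{VE}}(\tau(t))=g^2(t)$. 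Your integration of the VE diffusion coefficient gives $\sigma^2_{\mathrm{VE}}(s)=\sigma^2_{\min}\bigl((\sigma_{\max}/\sigma_{\min})^{2s/T}-1\bigr)$, consistent with Appendix B, and inverting this reproduces the stated $\tau(t)$; your use of the hypothesis $\sigma^2_{\max}-\sigma^2_{\min}>g^2(T)$ to keep $\tau(t)$ inside $[0,T)$ (using monotonicity of $g$) is precisely the role that assumption plays. In short, where the paper delegates to a citation, you supply the elementary computation underlying it; the only thing you take on faith, as the paper does, is that the displayed perturbation kernel is indeed the transition law of the CsubVP SDE of Table \ref{tab:Contractive SDEs coefficients}, which is a routine linear-SDE calculation but worth a sentence if you want the argument fully closed.
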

\quad So it suffices to take $\nabla \log p_{\scalebox{0.7}{CsubVP}}(t,x)\approx s_{pre}(\tau(t),x/f(t))$,
meaning that we can exploit existing score matching neural nets, 
or pretrained weights for CDPM sampling.

\section{Experiments}
\label{sc4}
\quad In this section, we report empirical results on the proposed contractive approach
and CDPMs.
We conduct experiments on a 1-dimensional synthetic example, Swiss Roll, MNIST, CIFAR10 32$\times$32 and AFHQv2 64$\times$64 datasets.

\subsection{CDPM shows better performance with the same scoring matching error} 
\label{sc41}
The goal is to learn/generate a single point mass at $x_0=-1$ in dimension one.
Since we can compute the score explicitly, 
we test the performance of different SDE models by adding the SAME noise level of error at each time/point.
Implementation details are given in Appendix G.1.
Table \ref{tab:global} compares the $W_2$ errors for OU and COU, 
with different noise levels and time discretization.
As is expected from the theory, COU is more robust to score matching error and time discretization.
\begin{table}[ht]
    \vspace{-10 pt}
    \centering
    \begin{minipage}{0.45\textwidth}
        \centering
        \begin{tabular}{l c c}
            \toprule
            Noise Level $\backslash$ $W_2\downarrow$ & $OU$ & $COU$ \\
            \midrule
            $\epsilon = 0.02$ ~& $0.245$ & $0.22$\\
            $\epsilon = 0.05$ & $0.265$ & $0.227$\\
            $\epsilon = 0.1$ & $0.30$ & $0.23$\\
            $\epsilon = 0.2$ & $0.39$ & $0.25$\\
            $\epsilon = 0.5$ & $0.7$ & $0.42$\\
            $\epsilon = 1$ & $1.3$ & $0.8$\\
            \bottomrule
        \end{tabular}
        \subcaption{time discretization $\Delta t= 0.02$}
        \label{tab:dt02}
    \end{minipage}\hfill
    \begin{minipage}{0.45\textwidth}
        \centering
        
        \begin{tabular}{l c c}
            \toprule
            Noise Level $\backslash W_2\downarrow$ & $OU$ & $COU$ \\
            \midrule
            $\epsilon = 0.02$ ~& $0.41$ & $0.35$\\
            $\epsilon = 0.05$ & $0.44$ & $0.36$\\
            $\epsilon = 0.1$ & $0.48$ & $0.36$\\
            $\epsilon = 0.2$ & $0.58$ & $0.36$\\
            $\epsilon = 0.5$ & $0.92$ & $0.43$\\
            $\epsilon = 1$ & $1.5$ & $0.7$\\
            \bottomrule
        \end{tabular}
        \subcaption{time discretization $\Delta t= 0.05$}
        \label{tab:dt05}
    \end{minipage}
    \caption{$W_2$ distance under the same score matching error.}
    \vspace{3 pt}
    \label{tab:global}
\end{table}

\subsection{Swiss Roll and MNIST datasets}
\label{sc42}
We apply CsubVP to Swiss Roll and MNIST datasets. 
Implementation details are reported in Appendix G.2.
Figure \ref{fig:Swiss Roll} shows the evolution process of CsubVP on the Swiss Roll data.
Figure \ref{fig:MNIST} provides image synthesis by CsubVP on MNIST. 
Table \ref{tab:w2_metric} shows a clear advantage of CDPMs 
over other SDE models in terms of $W_2$ error and FID score.


\begin{table}[!htbp]
    \begin{minipage}{0.4\textwidth}
    \centering
        \begin{tabular}{l c c}
            \toprule
            Model (SDE) & $W_2$ $\downarrow$ & FIDs $\downarrow$ \\
            \midrule
            OU ~& $0.29$ & -\\
            VP~\cite{Song20}& $0.33$ & $0.79$ \\
            subVP~\cite{Song20}& $0.34$ & $0.52$ \\
            VE~\cite{Song20}& $0.18$ &  $0.20$ \\
            \midrule
            \textbf{CDPMs} \\
            \midrule
            COU & $\textbf{0.10}$ & -\\
            CsubVP & $\textbf{0.14}$ & $\textbf{0.03}$\\
            \bottomrule
        \end{tabular}
        \vspace{5 pt}
        \caption{$W_2$ metric on Swiss Roll and FIDs on MNIST synthesis.}
        \vspace{3 pt}
        \label{tab:w2_metric}
    \end{minipage}
    \begin{minipage}{0.5\textwidth}
        \centering
        \begin{subfigure}{1\textwidth}
        \includegraphics[width=1\linewidth]{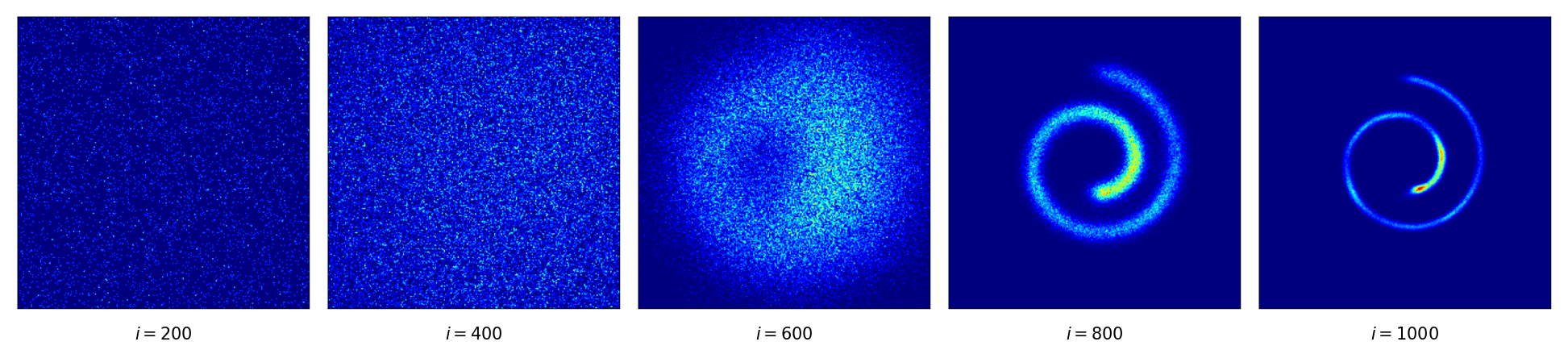}
        \subcaption{Swiss Roll generation with 200, 400, 600, 800, 10000 iterations.}
        \label{fig:Swiss Roll}
        \end{subfigure}
        \begin{subfigure}{0.8\textwidth}
        \includegraphics[width=1\linewidth]{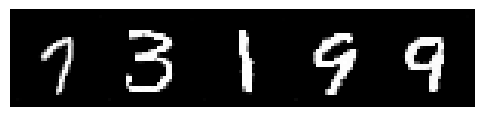}
        \subcaption{MNIST synthesis by CsubVP.}
        \label{fig:MNIST}
        \end{subfigure}
     \end{minipage}
\end{table}

\subsection{CIFAR-10 dataset}
\label{sc43}
We first test the performance of our proposed CsubVP on the task of unconditional synthesis of the CIFAR-10 dataset. 
We compute and compare FID and inception scores of CsubVP and other SDE models. 
Implementation details are given in Appendix G.3.

\quad Figure \ref{fig:CIFAR10syn1} provides image synthesis 
on CIFAR-10.
From Table \ref{tab:cifar},
CsubVP shows the best performance
among all known classes of SDE-based diffusion models.
In particular, it outperforms VE SDEs (non-deep version in \cite{Song20})
by achieving both smaller FIDs and higher Inception Scores.
($*$ the model evaluation is conducted on our own machine 
(4 4090RTX GPUs)
given the checkpoints provided by \cite{Song20}).

\begin{minipage}{\textwidth}
\vspace{5 pt}
\begin{minipage}[b]{0.49\textwidth}
    \centering
    \includegraphics[width=0.95\linewidth]{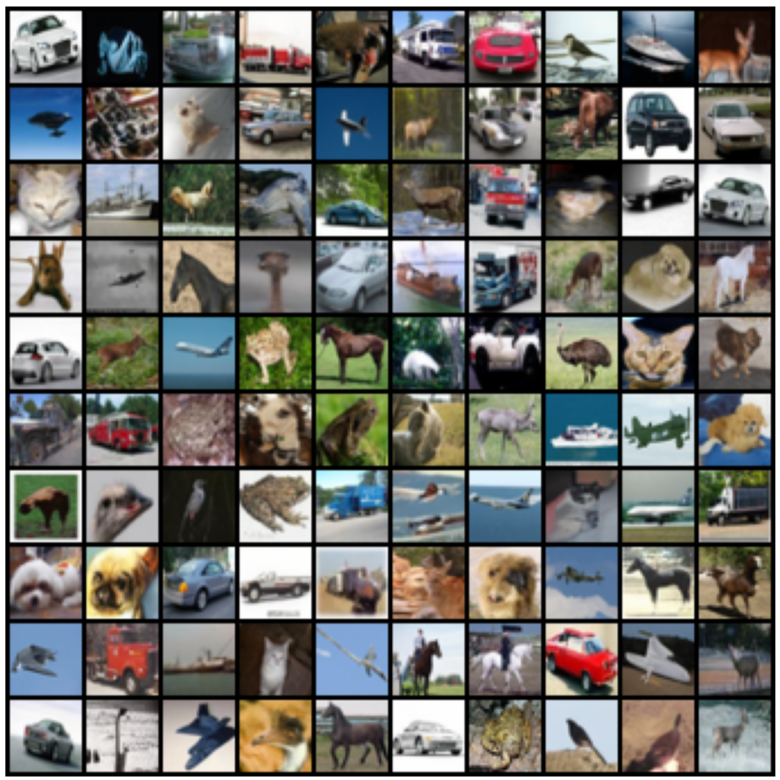}
    \captionof{figure}{CIFAR-10 Synthesis (CsubVP).}
    \label{fig:CIFAR10syn1}
  \end{minipage}
\hfill
\begin{minipage}[b]{0.49\textwidth}
    \small
    \centering
    \begin{tabular}{l c c c c}
    \toprule
        Model & Inception $\uparrow$  &FID $\downarrow$\\
         \midrule
        PixelCNN~\cite{van2016conditional} & 4.60 &65.9\\
        IGEBM~\citep{du2020improved}& 6.02 &40.6\\
        ViTGAN~\cite{lee2021vitgan} & $9.30$ &$6.66$\\
        StyleGAN2-ADA~\citep{karras2020training}& $9.83$ & $2.92$\\
        NCSN~\cite{Song19}&${8.87 }$ & $25.32$\\
        NCSNv2~\cite{song2020improved}&${8.40 }$ & $10.87$\\
        DDPM~\citep{Ho20}&$9.46$ & $3.17$\\
        DDIM, $T=50$~\citep{DDIM} &-&$4.67$\\
        DDIM, $T=100$~\citep{DDIM} &-&$4.16$\\
        \midrule
        \textbf{NCSN$++$}\\
        \midrule
        VP SDE~\cite{Song20}& $9.58$ &  $2.55$\\
        subVP SDE~\cite{Song20}& $9.56$ & $2.61$\\
         VE SDE~\cite{Song20}& $9.68^*$ &  $2.50^*$\\
         CsubVP ~ & $\textbf{10.18}$ & $\textbf{2.47}$ &\\
         \bottomrule
\end{tabular}
\vspace{2.5pt}
\captionof{table}{Inception \& FID.}
\label{tab:cifar}
\end{minipage}
\end{minipage}

\smallskip
\quad We also show how to improve the baseline pretrained models using our idea of contraction. 
Given the checkpoints of EDM \cite{karras2022elucidating} on CIFAR10 and AFHQv2 datasets, 
we modify the perturbation kernel to let $s(0)=1-\epsilon<1$,
leading to an increasing function as in Figure \ref{fig:perturbation kernels of different SDEs}.
This simple technique, motivated by our contraction approach,
yields improvement to the EDM baselines as in Table \ref{tab:cifar2}.
Moreover, we observe the improvement of the sample quality
by comparing the images generated by EDM and EDM with contraction,
see Figure \ref{fig:CIFAR10syn}.
\vspace{10 pt}

\begin{minipage}{\textwidth}
\begin{minipage}[b]{0.4\textwidth}
\vspace{2pt}
    \centering
    \includegraphics[width=1\linewidth]{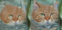}
    \captionof{figure}{(AFHQv2 sample) \textbf{LEFT}: EDM,
    \textbf{RIGHT}: EDM with contraction.}
    \label{fig:CIFAR10syn}
\end{minipage}
\hfill
\begin{minipage}[b]{0.58\textwidth}
    \small
    \centering
    \begin{tabular}{l c c c}
    \toprule
        Model/FID $\downarrow$ & EDM*\cite{karras2022elucidating}  & `+' Contraction & NFE\\
         \midrule
        \textbf{CIFAR10 32$\times$32}\\
        \midrule
        VP SDE (cond)& 1.85 & 1.83 & 35 \\
        VE SDE (cond)& 1.83 & 1.82 & 35 \\
        VP SDE (uncond)& 1.96 & 1.94 & 35 \\
        VE SDE (uncond) & 1.97 & 1.97& 35 \\
        \midrule
        \textbf{AFHQv2 64$\times$64}\\
        \midrule
        VP SDE (uncond)& 2.10 & 2.08 & 79 \\
        VE SDE (uncond) & 2.24 & 2.20 & 79 \\
        \bottomrule
\end{tabular}
\vspace{-2.5pt}
\captionof{table}{FID scores (*our reruns).}
\label{tab:cifar2}
\end{minipage}
\end{minipage}

\section{Conclusion}
\label{sc5}

\quad In this paper,
we propose a new criterion -- 
the contraction of backward sampling
in the design of SDE-based DPMs.
This naturally leads to a novel class of contractive DPMs.
The main takeaway is that the contraction of the backward process
limits score matching errors from propagating,
and controls discretization error as well. 
Our proposal is supported by theoretical considerations,
and is corroborated by experiments. 
Notably, our proposed contractive subVP outperforms
other SDE-based DPMs on CIFAR 10 dataset.
Though the intention of this paper is not to beat SOTA diffusion models,
CDPMs show promising results
that we hope to trigger further research.

\quad There are a few directions to extend this work.
First, we assume that score matching errors
are bounded in $L^2$
as in \cite{Chen23, GNZ23, LLT23}.
It is interesting to see whether this assumption 
can be relaxed to more realistic conditions 
given the application domain. 
Second, it is desirable to establish sharp theory for CDPMs, with dimension dependence.
Finally, our formulation is based on SDEs,
and hence stochastic samplers.
We don't look into ODE samplers as in 
\cite{Song20,xu2022poisson,LW23, Chen23b}.
This leaves open the problems such as
whether the proposed CDPMs perform
well by ODE samplers,
and why the ODE samplers derived from SDEs
outperform those directly learnt,
as observed in previous studies.

\medskip
{\bf Acknowledgement}: 
We thank Jason Altschuler, Yuxin Chen, Sinho Chewi, Xuefeng Gao, Dan Lacker, Yuting Wei and David Yao for helpful discussions.
We gratefully acknowledges financial support through NSF grants DMS-2113779 and
DMS-2206038,
and by a start-up grant at Columbia University.

\bibliographystyle{abbrv}
\bibliography{diffusion}

\clearpage
\appendix
\section*{Appendix}

\subsection*{A. Proof of SDE time reversal}

The time reversal of SDEs \cite{Ander82, HP86}.
\begin{theorem}
\label{thm:SDErev}
Under suitable conditions on $b(\cdot, \cdot)$, $\sigma(\cdot)$ and $\{p(t, \cdot)\}_{0 \le t \le T}$, 
we have
\begin{equation}
\label{eq:coefrev}
\overline{\sigma}(t) = \sigma(T-t), \quad
\overline{b}(t,x) = -b(T-t, x) + \sigma^2(T-t)  \nabla \log p (T-t,x),
\end{equation}
where the term $\nabla \log p(\cdot, \cdot)$ is called Stein's score function.
\end{theorem}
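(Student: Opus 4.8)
The plan is to establish \eqref{eq:coefrev} first at the level of one-dimensional marginal densities through the Fokker--Planck (forward Kolmogorov) equation, and then upgrade this marginal identity to a statement about the law of the whole reversed process. First I would recall that, under the stated regularity, the density $p(t,\cdot)$ of the forward process \eqref{eq:SDE} solves
\begin{equation}
\partial_t p(t,x) = -\nabla \cdot \big( b(t,x)\, p(t,x) \big) + \tfrac{1}{2}\sigma^2(t)\, \Delta p(t,x).
\end{equation}
Writing $q(t,x):=p(T-t,x)$ for the density of $\overline{X}_t = X_{T-t}$, the chain rule together with the equation above yields $\partial_t q(t,x) = \nabla\cdot\big(b(T-t,x)\,p(T-t,x)\big) - \tfrac{1}{2}\sigma^2(T-t)\,\Delta p(T-t,x)$.

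The next step is purely algebraic and is where the score appears. Using the identity $p\,\nabla\log p = \nabla p$, I set $\overline{\sigma}(t):=\sigma(T-t)$ and $\overline{b}(t,x):=-b(T-t,x)+\sigma^2(T-t)\,\nabla\log p(T-t,x)$ as in \eqref{eq:coefrev}; then $\overline{b}(t,x)\,q(t,x) = -b(T-t,x)\,p(T-t,x) + \sigma^2(T-t)\,\nabla p(T-t,x)$, so that a direct differentiation gives
\begin{equation}
-\nabla\cdot\big(\overline{b}(t,\cdot)\,q(t,\cdot)\big) + \tfrac{1}{2}\overline{\sigma}^2(t)\,\Delta q(t,\cdot) = \nabla\cdot\big(b(T-t,\cdot)\,p(T-t,\cdot)\big) - \tfrac{1}{2}\sigma^2(T-t)\,\Delta p(T-t,\cdot).
\end{equation}
The right-hand side equals $\partial_t q$, so $q$ solves exactly the Fokker--Planck equation of the diffusion with drift $\overline{b}$ and diffusion coefficient $\overline{\sigma}$, which reads off the claimed coefficients. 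The cancellation of one half of the Laplacian term into the gradient-log-density drift is the heart of the computation.

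The main obstacle is that the matching of Fokker--Planck equations only shows that the marginals of $\overline{X}_t$ coincide with those of any process solving \eqref{eqn:SDE_reversal}--\eqref{eqn:SDE_reversal_coef}; it does not by itself prove that the time reversal $\overline{X}$ is genuinely a diffusion with these coefficients. To close this gap I would follow the Haussmann--Pardoux approach \cite{HP86}: work with the reversed filtration and verify, for every $\varphi \in C_c^\infty(\mathbb{R}^d)$, that $\varphi(\overline{X}_t) - \varphi(\overline{X}_0) - \int_0^t \big(\overline{b}(s,\overline{X}_s)\cdot\nabla\varphi(\overline{X}_s) + \tfrac{1}{2}\overline{\sigma}^2(s)\Delta\varphi(\overline{X}_s)\big)\,ds$ is a martingale, which identifies $\overline{X}$ as the solution of the martingale problem for the generator $\overline{b}\cdot\nabla+\tfrac{1}{2}\overline{\sigma}^2\Delta$ and hence of \eqref{eqn:SDE_reversal}. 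This step is where the ``suitable conditions'' enter: one needs $p(t,\cdot)>0$ with enough regularity that $\nabla\log p$ is well defined, together with a finite-Fisher-information/integrability bound such as $\int_0^T\!\!\int_{\mathbb{R}^d} |\nabla\log p(t,x)|^2\,p(t,x)\,dx\,dt<\infty$, so that the score-correction drift is integrable and the duality (integration by parts against the forward generator) used in the martingale computation is justified. The density-level computation is routine; the rigor of this promotion to the process level is the delicate part.
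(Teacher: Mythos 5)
Your proposal follows the same route as the paper's own proof: both derive \eqref{eq:coefrev} by writing the time-reversed density $q(t,x)=p(T-t,x)$, comparing its evolution equation (obtained from the forward Fokker--Planck equation) with the Fokker--Planck equation of a diffusion with coefficients $(\overline{b},\overline{\sigma})$, and using $p\,\nabla\log p=\nabla p$ to absorb part of the Laplacian into the drift. The paper explicitly presents this only as a heuristic marginal-level derivation and defers the process-level identification to \cite{HP86} and \cite{Quastel02}; your additional martingale-problem step is precisely the content of those references, so your write-up is, if anything, more careful than the paper's about where the ``suitable conditions'' are actually needed.
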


\begin{proof}
Here we give a heuristic derivation of the time reversal formula \eqref{eq:coefrev}.
First, the infinitesimal generator of $X$ is 
$\mathcal{L}:= \frac{1}{2} \sigma^2(t) \Delta  +b \cdot \nabla$.
It is known that the density $p(t,x)$ satisfies the the Fokker–Planck equation:
\begin{equation}
\label{eq:FPfd}
\frac{\partial}{\partial t} p(t,x) = \mathcal{L}^*p(t,x),
\end{equation}
where $\mathcal{L}^*:= \frac{1}{2} \sigma^2(t) \Delta - \nabla \cdot b$ is the adjoint of $\mathcal{L}$.
Let $\overline{p}(t,x) := p(T-t, x)$ be the probability density of the time reversal $\overline{X}$. 
By \eqref{eq:FPfd}, we get
\begin{equation}
\label{eq:FPfdbar}
\frac{\partial}{\partial t} \overline{p}(t,x) = -\frac{1}{2} \sigma^2 (t) \Delta \overline{p}(t,x)  + \nabla \cdot \left(b(T-t, x) \, \overline{p}(t,x) \right).
\end{equation}
On the other hand, we expect the generator of $\overline{X}$ to be 
$\overline{\mathcal{L}}:=  \frac{1}{2} \overline{\sigma}^2(t)  \Delta + \overline{b} \cdot \nabla$.
The Fokker-Planck equation for $\overline{p}(t,x)$ is
\begin{equation}
\label{eq:FPbdbar}
\frac{\partial}{\partial t} \overline{p}(t,x) = \frac{1}{2}  \overline{\sigma}^2(t) \Delta \overline{p}(t,x)- \nabla \cdot \left(\overline{b}(t, x) \, \overline{p}(t,x) \right).
\end{equation}
Comparing \eqref{eq:FPfdbar} and \eqref{eq:FPbdbar},
we set $\overline{\sigma}(t) = \sigma(T-t)$ and then get
\begin{equation*}
\left(b(T-t,x) + \overline{b}(t,x)\right) \overline{p}(t,x) = \sigma^2(T-t) \, \nabla \overline{p}(t,x).
\end{equation*}
This yields the desired result. 
\end{proof}

\quad Let's comment on Theorem \ref{thm:SDErev}.
\cite{HP86} proved the result by assuming that $b(\cdot, \cdot)$ and $\sigma(\cdot, \cdot)$ are globally Lipschitz, and 
the density $p(t,x)$ satisfies an a priori $H^1$ bound. 
The implicit condition on $p(t,x)$ is guaranteed if $\partial_t + \mathcal{L}$ is hypoelliptic.
These conditions were relaxed in \cite{Quastel02}. 
In another direction, \cite{Fo85, Fo86} used an entropy argument to prove the time reversal formula in the case $\sigma(t) = \sigma$. 
This approach was further developed in \cite{CCGL21} which made connections to optimal transport. 

\subsection*{B. Examples of DPMs}

As in Table \ref{tab:SDEs coefficients}, we present a few examples of DPMs, here are their concrete details:

\begin{enumerate}[itemsep = 3 pt]
\item[(a)]
OU processes: $b(t,x) = \theta (\mu - x)$ with $\theta > 0$, $\mu \in \mathbb{R}^d$; 
$\sigma(t) = \sigma > 0$.
The distribution of $(X_t \,|\, X_0 = x)$ is 
$\mathcal{N}(\mu + (x - \mu) e^{-\theta t}, \frac{\sigma^2}{2\theta} (1 - e^{-2 \theta t}) I)$,
which is approximately 
$\mathcal{N}(\mu, \frac{\sigma^2}{2 \theta} I)$ as $t$ is large.
The backward process specializes to
\begin{equation}
\label{eq:OUrev2}
d\overline{X}_t = (\theta(\overline{X}_t - \mu) + \sigma^2 \nabla \log p(T-t, \overline{X}_t)) dt  + \sigma dB_t, 
\,\, \overline{X}_0 \sim \mathcal{N}\left(\mu, \frac{\sigma^2}{2 \theta} I\right).
\end{equation}
\item[(b)]
VE-SDE:
$b(t,x) = 0$ and 
$\sigma(t) = \sigma_{\min} \left(\frac{\sigma_{\max}}{\sigma_{\min}} \right)^{\frac{t}{T}} \sqrt{\frac{2}{T} \log \frac{\sigma_{\max}}{\sigma_{\min}}}$
with 
$\sigma_{\min} \ll  \sigma_{\max}$.
The distribution of $(X_t \,|\, X_0 = x)$ is $\mathcal{N}\left(x, \sigma^2_{\min}\left( \left(\frac{\sigma_{\max}}{\sigma_{\min}} \right)^{\frac{2t}{T}}- 1 \right) I \right)$, 
which can be approximated by $\mathcal{N}(0, \sigma^2_{\max} I)$ at $t = T$.
The backward process is:
\begin{equation}
\label{eq:VErev2}
d\overline{X}_t  = \sigma^2(T-t) ) \nabla \log p(T-t, \overline{X}_t)+  \sigma(T-t) dB_t, 
\,\, \overline{X}_0 \sim \mathcal{N}(0, \sigma^2_{\max} I).
\end{equation}
\item[(c)]
VP-SDE: 
$b(t,x) = -\frac{1}{2}\beta(t) x$ and $\sigma(t) = \sqrt{\beta(t)}$,
where 
$\beta(t):= \beta_{\min} + \frac{t}{T}(\beta_{\max} - \beta_{\min})$ with $\beta_{\min} \ll \beta_{\max}$.
The distribution of $(X_t \,|\, X_0 = x)$ is
$$\mathcal{N}( e^{-\frac{t^2}{4T}(\beta_{\max} - \beta_{\min}) - \frac{t}{2} \beta_{\min}}  x,  
(1 -e^{-\frac{t^2}{2T}(\beta_{\max} - \beta_{\min}) - t \beta_{\min}} ) I),$$
which can be approximated by $\mathcal{N}(0,I)$ at $t = T$.
The backward process is:
\begin{equation}
\label{eq:VPrev2}
\begin{aligned}
d\overline{X}_t & = \bigg(\frac{1}{2} \beta(T-t) \overline{X}_t + \beta(T-t) \nabla \log p(T-t, \overline{X}_t)\bigg)  dt \\
& \qquad \qquad \qquad \qquad \qquad \qquad  \quad + \sqrt{\beta(T-t))} dB_t, \,\, \overline{X}_0 \sim \mathcal{N}(0, I).
\end{aligned}
\end{equation}
\item[(d)]
subVP-SDE: 
$b(t,x) = -\frac{1}{2} \beta(t)x$ and $\sigma(t) = \sqrt{\beta(t) (1 - e^{-2 \int_0^t \beta(s) ds})}$.
The distribution of $(X_t \,|\, X_0 = x)$ is
$\mathcal{N}( e^{-\frac{t^2}{4T}(\beta_{\max} - \beta_{\min}) - \frac{t}{2} \beta_{\min}}  x,  
(1 -e^{-\frac{t^2}{2T}(\beta_{\max} - \beta_{\min}) - t \beta_{\min}} )^2 I )$,
which can be approximated by $\mathcal{N}(0,I)$ at $t = T$.
The backward process is:
\begin{equation}
\label{eq:SVPrev2}
\begin{aligned}
d\overline{X}_t & = \bigg( \frac{1}{2} \beta(T-t) \overline{X}_t  + \beta(T-t)(1 - \gamma(T-t)) \nabla \log p(T-t, \overline{X}_t)\bigg)  dt \\
& \qquad \qquad \qquad \qquad + \sqrt{\beta(T-t) (1 - \gamma(T-t))} dB_t,  \,\, \overline{X}_0 \sim \mathcal{N}(0, I),
\end{aligned}
\end{equation}
where $\gamma(t):= e^{-2 \int_0^t \beta(s) ds} = e^{-\frac{t^2}{T}(\beta_{\max} - \beta_{\min}) - 2t \beta_{\min}}$.
\end{enumerate}
\subsection*{C. Score matching}



\smallskip
(a) Sliced score matching. 
One way is that we further address the term $\nabla_x\cdot s_{\theta}(t,x)$ by random projections. The method proposed in \cite{Song20sl} is called sliced score matching. 
Considering the Jacobian matrix $\nabla s_{\theta}(t,x)\in\mathbb{R}^{d \times d}$, 
we have
$$
\nabla \cdot s_{\theta}(t,x)=\operatorname{Tr}(\nabla s_{\theta}(t,x))=\mathbb{E}_{v\sim \mathcal{N}(0,I)}\left[v^{\top}\nabla s_{\theta}(t,x)v\right].
$$
We can then rewrite the training objective as:
\begin{equation}
\label{eq:equivscorematb_slicedSM}
\min_\theta \widetilde{\mathcal{J}}_{\text{SSM}}(\theta) = \mathbb{E}_{t \in \mathcal{U}(0, T)}\mathbb{E}_{v_t\sim \mathcal{N}(0,I)}\mathbb{E}_{p(t, \cdot)} \left[\lambda(t)\left(\|s_\theta(t,X)\|^2  + 2 \, v^{\top}\nabla (v^{\top}s_{\theta}(t,x))\right)\right].
\end{equation}
which can be computed easily. 
It requires only two times of back propagation, as $v^{\top}s_{\theta}(t,x)$ can be seen as adding a layer of the inner product between $v$ and $s_{\theta}$.

\medskip
(b) Denoising score matching. 
The second way is that we go back to the objective \eqref{eq:weighted score matching},
and use a nonparametric estimation.
The idea stems from \cite{Hyv05,Vi11},
in which it was shown that 
$\mathcal{J}_{ESM}$ is equivalent to the following denoising score matching (DSM) objective:
$$
\tilde{\mathcal{J}}_{\text{DSM}}(\theta)=\mathbb{E}_{t\sim\mathcal{U}(0, T)}\left\{\lambda(t) \mathbb{E}_{X_0\sim p_{data}(\cdot)} \mathbb{E}_{X_t \mid X_0}\left[\left|s_{\theta}(t,X(t))-\nabla_{X_t} \log p(t,X_t \mid X_0)\right|^2\right]\right\}
$$

\begin{theorem}
\label{thm:DSM}
Let 
$\mathcal{J}_{\text{DSM}}(\theta):= \mathbb{E}_{X_0\sim p_{data}(\cdot)} \mathbb{E}_{X_t \mid X_0}\left[\left|s_{\theta}(t,X(t))-\nabla_{X_t} \log p(t,X_t \mid X_0)\right|^2\right]$.
Under suitable conditions on $s_\theta$, we have 
$\mathcal{J}_{\text{DSM}}(\theta) = \mathcal{J}_{\text{ESM}}(\theta) + C$ for some $C$ independent of $\theta$. 
Consequently, the minimum point of $\mathcal{J}_{\text{DSM}}$ and that of $\mathcal{J}_{\text{ESM}}$ coincide. 
\end{theorem}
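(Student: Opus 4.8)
The plan is to expand both quadratic objectives and match them term by term, reducing the claim to an identity between the two cross terms. Writing out the square in $\mathcal{J}_{\text{ESM}}$ gives
\begin{equation*}
\mathcal{J}_{\text{ESM}}(\theta) = \mathbb{E}_{p(t,\cdot)}|s_\theta(t,X)|^2 - 2\,\mathbb{E}_{p(t,\cdot)}\!\left[s_\theta(t,X)\cdot \nabla \log p(t,X)\right] + \mathbb{E}_{p(t,\cdot)}|\nabla \log p(t,X)|^2,
\end{equation*}
and an analogous expansion of $\mathcal{J}_{\text{DSM}}$, with the conditional score $\nabla_{X_t}\log p(t,X_t\mid X_0)$ in place of $\nabla\log p(t,X)$ and the expectation taken over the joint law of $(X_0,X_t)$. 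The third term in each expansion is manifestly independent of $\theta$; their difference will be the constant $C$.

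First I would dispatch the pure $|s_\theta|^2$ terms. Since the marginal density $p(t,\cdot)$ is the $X_0$-mixture of the conditional densities, namely $p(t,x) = \int p(t,x\mid x_0)\,p_{\scalebox{0.7}{data}}(x_0)\,dx_0$, integrating $|s_\theta(t,x)|^2$ against the DSM joint law and then marginalizing over $X_0$ returns exactly $\mathbb{E}_{p(t,\cdot)}|s_\theta(t,X)|^2$, so these terms coincide. The crux is therefore the cross term. Here I would use the elementary identity $\nabla_x \log q(x)\cdot q(x) = \nabla_x q(x)$, valid for any density $q$, to rewrite each cross term as an integral of $s_\theta$ against a \emph{gradient of a density} rather than against a score. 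The ESM cross term becomes $\int s_\theta(t,x)\cdot \nabla_x p(t,x)\,dx$, while the DSM cross term becomes $\int\!\int s_\theta(t,x)\cdot \nabla_x p(t,x\mid x_0)\,p_{\scalebox{0.7}{data}}(x_0)\,dx\,dx_0$.

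To equate the two, I would differentiate the marginalization identity under the integral sign to obtain $\nabla_x p(t,x) = \int \nabla_x p(t,x\mid x_0)\,p_{\scalebox{0.7}{data}}(x_0)\,dx_0$, then apply Fubini to swap the order of integration in the DSM cross term; this collapses it onto $\int s_\theta(t,x)\cdot \nabla_x p(t,x)\,dx$, matching the ESM cross term exactly. With the $|s_\theta|^2$ terms and the cross terms identified, the two objectives differ only by the $\theta$-independent score-norm terms, so $\mathcal{J}_{\text{DSM}}(\theta)=\mathcal{J}_{\text{ESM}}(\theta)+C$ with $C=\mathbb{E}|\nabla_{X_t}\log p(t,X_t\mid X_0)|^2 - \mathbb{E}_{p(t,\cdot)}|\nabla\log p(t,X)|^2$; since this is constant in $\theta$, the two objectives share the same minimizers. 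The main obstacle is not algebraic but analytic: the phrase \emph{suitable conditions on $s_\theta$} must supply enough integrability and decay so that differentiation under the integral sign and the Fubini interchange are legitimate and all the integrals above are finite. These hold comfortably in the linear-SDE setting of interest, where the conditionals $p(t,\cdot\mid x_0)$ are Gaussian with explicit smooth densities and $s_\theta$ is taken with controlled growth, so the regularity bookkeeping is routine but should be stated explicitly.
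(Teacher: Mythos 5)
Your proposal is correct and follows essentially the same route as the paper's proof: both expand the squares, reduce the claim to the cross terms, and use the identity $p\,\nabla\log p=\nabla p$ together with the marginalization $p(t,x)=\int p(t,x\mid x_0)p_{\scalebox{0.7}{data}}(x_0)\,dx_0$ and an interchange of integration/differentiation to match them, with $C$ being the difference of the $\theta$-independent score-norm terms. The only difference is that you are more explicit about the analytic hypotheses needed for Fubini and differentiation under the integral, which the paper subsumes under ``suitable conditions.''
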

\begin{proof}
We have 
\begin{equation*}
\begin{aligned}
\mathcal{J}_{\text{ESM}}(\theta) 
& = \mathbb{E}_{p(t, \cdot)} |s_\theta(t,X) - \nabla \log p(t,X)|^2\\
& = \mathbb{E}_{p(t, \cdot)} \left[|s_\theta(t,X)|^2 - 2 s_\theta(t,X)^{\top}\nabla \log p(t,X)+|\nabla \log p(t,X)|^2\right].
\end{aligned}
\end{equation*}
Consider the inner product term, rewriting it as:
\begin{equation*}
\begin{aligned}
\mathbb{E}_{p(t, \cdot)} \left[s_\theta(t,X)^{\top}\nabla \log p(t,X)\right]& = \int_{x}p(t,x)s_\theta(t,x)^{\top}\nabla \log p(t,x) \mathrm{d}x\\
& = \int_{x}s_\theta(t,x)^{\top}\nabla p(t,x) \mathrm{d}x\\
& = \int_{x}
s_\theta(t,x)^{\top}\nabla 
\int_{x_0}p(0,x_0)p(t,x|x_0)\mathrm{d}x_0 \mathrm{d}x\\
& = \int_{x_0}\int_{x}
s_\theta(t,x)^{\top} 
p(0,x_0)\nabla p(t,x|x_0)\mathrm{d}x\mathrm{d}x_0 \\
& = \int_{x_0}p(0,x_0)\int_{x}
s_\theta(t,x)^{\top} 
p(t,x|x_0) \nabla \log p(t,x|x_0)\mathrm{d}x\mathrm{d}x_0\\
& = \mathbb{E}_{X_0\sim p(0,\cdot)} \mathbb{E}_{X_t \mid X_0} \left[s_{\theta}(t,X(t))^{\top}\nabla_{X_t} \log p(t,X_t \mid X_0)\right],
\end{aligned}
\end{equation*}
combining $\mathbb{E}_{X} |s_\theta(t,X)|^2=\mathbb{E}_{X_0} \mathbb{E}_{X \mid X_0}|s_\theta(t,X)|^2$ concludes our proof.
\end{proof}

\quad The intuition of DSM is that following the gradient $s_{\theta}$ of the log density at some corrupted point $\tilde{x}$ should ideally move us towards the clean sample $x$. The reason that the objective $\mathcal{J}_{DSM}$ is comparatively easy to solve is that conditional distribution usually satisfies a good distribution, like Gaussian kernel, i.e. $p(X_t \mid X_0) \sim N\left(X_t ; \mu_{t}(X_0), \sigma_t^2 I\right)$, which is satisfied in many cases of DPM, we can explicitly compute that:
$$
\nabla_{X_t} \log p(t,X_t \mid X_0)=\frac{1}{\sigma_t^2}(\mu_{t}(X_0)-X_t) .
$$
Direction $\frac{1}{\sigma_t^2}(X_0-\mu_{t}(X_0))$ clearly corresponds to moving from $\tilde{x}$ back towards clean sample $x$, and we want $s_{\theta}$ to match that as best it can. Moreover,  empirically validated by e.g. \cite{Song20}, a good candidate of $\lambda(t)$ is chosen as:
$$
\lambda(t) \propto 1 / \mathbb{E}\left[\left|\nabla_{X_t} \log p(t,X_t \mid X_0)\right|^2\right]=\sigma_t^2
$$
Thus, our final optimization objective is:
$$
\begin{aligned}
\tilde{\mathcal{J}}_{\text{DSM}}(\theta)&=\mathbb{E}_{t\sim\mathcal{U}(0, T)}\left\{\sigma_t^2 \mathbb{E}_{X_0\sim p_{data}(\cdot)} \mathbb{E}_{X_t \mid X_0}\left[\left|s_{\theta}(t,X(t))-\nabla_{X_t} \log p(t,X_t \mid X_0)\right|^2\right]\right\}\\
& = \mathbb{E}_{t\sim\mathcal{U}(0, T)}\left\{ \mathbb{E}_{X_0\sim p_{data}(\cdot)} \mathbb{E}_{X_t \mid X_0}\left[\left|\sigma_t s_{\theta}(t,X(t))+\frac{X_t-\mu_{t}(X_0)}{\sigma_t}\right|^2\right]\right\}\\
& = \mathbb{E}_{t\sim\mathcal{U}(0, T)}\left\{ \mathbb{E}_{X_0\sim p_{data}(\cdot)} \mathbb{E}_{\epsilon_t\sim\mathcal{N}(0,I)}\left[\left|\sigma_t s_{\theta}(t,\mu_{t}(X_0)+\sigma_t\epsilon_t)+\epsilon_t\right|^2\right]\right\}
\end{aligned}
$$
where the second equality holds when $X_t\mid X_0$ follows a conditionally normal and the third equality follows from a reparameterization/change of variables.

\subsection*{D. Proof of Theorem \ref{thm:sensitivity}}

The idea relies on coupling, which is similar to \cite[Lemma 4]{ZTY23}.
Consider the coupled SDEs:
\begin{equation*}
\left\{ \begin{array}{lcl}
d Y_t = \left(-b(T-t, Y_t) + \sigma^2(T-t) \nabla \log p(T-t, Y_t) \right) dt+ \sigma(T-t) dB_t, \\
d Z_t =  \left(-b(T-t, Z_t) + \sigma^2(T-t)s_\theta(T-t, Z_t) \right) dt+ \sigma(T-t) dB_t,
\end{array}\right.
\end{equation*}
where $(Y_0, Z_0)$ are coupled to achieve $W_2(p(T,\cdot), p_{\scalebox{0.7}{noise}}(\cdot))$,
i.e. $\mathbb{E}|Y_0 - Z_0|^2=W_2(p(T,\cdot), p_{\scalebox{0.7}{noise}}(\cdot))$. 
It is easy to see that
\begin{equation}
\label{eq:W2}
W_2^2(p_{\scalebox{0.7}{data}}(\cdot), \overline{X}_T) \le \mathbb{E}|Y_T - Z_T|^2.
\end{equation}
So the goal is to bound $\mathbb{E}|Y_T - Z_T|^2$. 
By It\^o's formula, we get
\begin{multline*}
d|Y_t - Z_t|^2 = 2(Y_t - Z_t) \cdot (-b(T-t, Y_t) + \sigma^2(T-t) \nabla \log p(T-t, Y_t) \\ + b(T-t, Z_t) - \sigma^2(T-t)s_\theta(T-t, Z_t)) dt
\end{multline*}
which implies that 
\begin{multline}
\label{eq:diff}
\frac{d \, \mathbb{E}|Y_t - Z_t|^2}{dt} 
= -2 \underbrace{\mathbb{E}((Y_t - Z_t) \cdot (b(T-t, Y_t) - b(T-t, Z_t))}_{(a)}) \\
+ 2 \underbrace{\mathbb{E}((Y_t - Z_t) \cdot \sigma^2(T-t) (\nabla \log p(T-t, Y_t) - s_\theta(T-t, Z_t)))}_{(b)}.
\end{multline}
By Assumption \ref{assump:sensitivity} (1), we get
\begin{equation}
\label{eq:terma}
(a) \ge r_b(T-t) \, \mathbb{E}|Y_t - Z_t|^2.
\end{equation}
Moreover,
\begin{equation}
\label{eq:termb}
\begin{aligned}
(b) & = \sigma^2(T-t) \bigg( \mathbb{E}((Y_t - Z_t) \cdot (\nabla \log p(T-t, Y_t) - \nabla \log p (T-t, Z_t))) \\
& \qquad \qquad \qquad \qquad \qquad \qquad \qquad \qquad + \mathbb{E}((Y_t - Z_t) \cdot (\nabla \log p (T-t, Z_t) - s_{\theta}(T-t, Z_t))) \bigg)\\
&  \le  \sigma^2(T-t) \left( L\, \mathbb{E}|Y_t - Z_t|^2 + h \mathbb{E}|Y_t - Z_t|^2 + \frac{1}{4h} \varepsilon^2\right), \\
\end{aligned}
\end{equation}
where we use Assumption \ref{assump:sensitivity} (2)(3) in the last inequality.
Combining \eqref{eq:diff}, \eqref{eq:terma} and \eqref{eq:termb}, we have
\begin{equation}
\label{eq:Gronwall}
\frac{d \, \mathbb{E}|Y_t - Z_t|^2}{dt} \le \left(-2 r_b(T-t) + (2h + 2L) \sigma^2(T-t) \right) \mathbb{E}|Y_t -Z_t|^2
+ \frac{\varepsilon^2}{2h} \sigma^2(T-t).
\end{equation}
Applying Gr\"onwall's inequality, we have:
\begin{equation*}
\mathbb{E}|Y_T - Z_T|^2\leq e^{u(T)}\mathbb{E}|Y_0 - Z_0|^2+ \frac{\varepsilon^2}{2h} \int_0^T \sigma^2(T-t) e^{u(T)-u(t)} dt,
\end{equation*}
which combined with \eqref{eq:W2} yields \eqref{eq:sensitivity}. 

\subsection*{E. Proof of Theorem \ref{thm:CVPW2}}

Recall that $r_b(t) = \frac{1}{2} \beta(t)$, $\sigma(t) = \sqrt{\beta(t)}$ and 
$p_{\tiny \mbox{noise}}(\cdot) \sim \mathcal{N}(0, (e^{\frac{T}{2} (\beta_{\max} + \beta_{\min})} - 1) I)$.
By \cite[Proposition 10]{GNZ23},
if $\log p_{\scalebox{0.7}{data}}(\cdot)$ is $\kappa$-strongly log-concave,
then $\nabla \log p(T-t, \cdot)$ is 
$\kappa \left(e^{\int_0^{T-t} \beta(s) ds}+ \kappa \int_0^{T-t}e^{\int_s^{T-t} \beta(v) dv} \beta(s) ds\right)^{-1}$-strongly concave.
Thus, the term $\mathbb{E}((Y_t - Z_t) \cdot (\nabla \log p(T-t, Y_t) - \nabla \log p(T-t, Z_t)))$ in \eqref{eq:termb} is bounded from above by
\begin{equation*}
    -\frac{\kappa}{e^{\int_0^{T-t} \beta(s) ds}+ \kappa \int_0^{T-t}e^{\int_s^{T-t} \beta(v) dv} \beta(s) ds} \mathbb{E}|Y_t - Z_t|^2,
\end{equation*}
instead of $L \mathbb{E}|Y_t - Z_t|^2$.
Consequently, 
the bound \eqref{eq:sensitivity} holds by replacing $u(t)$ with 
\begin{equation}
u_{\tiny \mbox{CVP}}(t): = \int_{T-t}^T \beta(s) \left(-1 + 2h - \frac{2 \kappa}{e^{\int_0^{s} \beta(v) dv}+ \kappa \int_0^{s}e^{ \int_v^{s} \beta(u) du} \beta(v) dv}\right) ds.
\end{equation}
Note that 
$u_{\tiny \mbox{CVP}}(T) \le - \int_{0}^T \beta(s) ds + 2  \beta_{\max} h T
- \frac{2 \kappa}{\kappa+1} \left(1 - e^{-\beta_{\min} T} \right)$
and 
$u_{\tiny \mbox{CVP}}(T) - u_{\tiny \mbox{CVP}}(T-t) \le \beta_{\max}(2h-1) t$.
Moreover, 
$W_2^2(p(T, \cdot), p_{\tiny \mbox{noise}}(\cdot)) 
\le e^{\int_0^T \beta(s)ds} \mathbb{E}_{p_{\tiny \mbox{data}}(\cdot)}|x|^2$.
Combining \eqref{eq:sensitivity} with the above estimates
yields \eqref{eq:sensitivity3}.

\subsection*{F. Proof of Theorem \ref{thm:globalerr}}

The analysis of the error $\left(\mathbb{E}|\overline{X}_T - \widehat{X}_N|^2\right)^{\frac{1}{2}}$
relies on the following lemmas. 
The lemma below proves the contraction of the backward SDE $\overline{X}$.
\begin{lemma}
\label{lem:contraction}
Let $(\overline{X}^x_t, \, 0 \le t \le T)$
be defined by \eqref{eq:SDErevc} with $\overline{X}^x_0 = x$.
Let Assumptions \ref{assump:sensitivity}, \ref{assump:2dis} and \ref{assump:contraction} hold.
Then
\begin{equation}
\label{eq:contraction}
\left(\mathbb{E}|\overline{X}^x_t - \overline{X}^y_t|^2\right)^{\frac{1}{2}}
\le \left(\mathbb{E}|x - y|^2\right)^{\frac{1}{2}} \exp(- 2 \beta t), \quad \mbox{for all } t,
\end{equation}
where $\overline{X}^x$ and $\overline{X}^y$ be coupled, i.e.
they are driven by the same Brownian motion with (different) initial values $x$ and $y$ respectively ($x$ and $y$ represent two random variables).
\end{lemma}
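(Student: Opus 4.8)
The plan is to adapt the synchronous-coupling argument already used in Appendix D for Theorem \ref{thm:sensitivity}, with one crucial change of emphasis: here \emph{both} processes $\overline{X}^x$ and $\overline{X}^y$ are driven by the same learned drift $s_\theta$ (not one by the true score and one by its estimate), and by the same Brownian motion. Because the diffusion coefficient $\sigma(T-t)$ is deterministic and common to both, the difference $D_t := \overline{X}^x_t - \overline{X}^y_t$ has vanishing diffusion coefficient; it is therefore a process of bounded variation, the Itô correction term drops out, and $|D_t|^2$ obeys a pathwise (hence deterministic, after taking expectations) differential inequality that we can integrate directly.

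First I would write the two copies of \eqref{eq:SDErevc} with initial values $x$ and $y$, subtract, and apply Itô's formula to $|D_t|^2$. Since the stochastic integrals cancel, taking expectations gives
\begin{align*}
\frac{d}{dt}\mathbb{E}|D_t|^2
&= -2\,\mathbb{E}\big[D_t\cdot(b(T-t,\overline{X}^x_t) - b(T-t,\overline{X}^y_t))\big]\\
&\quad + 2\sigma^2(T-t)\,\mathbb{E}\big[D_t\cdot(s_\theta(T-t,\overline{X}^x_t) - s_\theta(T-t,\overline{X}^y_t))\big].
\end{align*}
For the first term I invoke the monotonicity of $b$ (Assumption \ref{assump:sensitivity}(1)) to bound it below by $r_b(T-t)\,\mathbb{E}|D_t|^2$. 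For the second term, the key point --- and the reason Assumption \ref{assump:2dis}(4) is imposed on $s_\theta$ rather than on $\nabla\log p$ --- is that the \emph{learned} score is the one actually driving the sampler; applying Cauchy--Schwarz together with the spatial Lipschitz bound $|s_\theta(T-t,\overline{X}^x_t) - s_\theta(T-t,\overline{X}^y_t)| \le L_s|D_t|$ (the time argument is common, so only the spatial part of the bound is used) yields the upper estimate $L_s\sigma^2(T-t)\,\mathbb{E}|D_t|^2$. Combining,
\begin{equation*}
\frac{d}{dt}\mathbb{E}|D_t|^2 \le 2\big(-r_b(T-t) + L_s\sigma^2(T-t)\big)\,\mathbb{E}|D_t|^2.
\end{equation*}

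Finally I would apply Grönwall's inequality and invoke the contraction Assumption \ref{assump:contraction}: integrating the coefficient produces the factor $\exp\!\big(-2\int_{T-t}^T (r_b(s) - L_s\sigma^2(s))\,ds\big)$, which by \eqref{eq:condcontraction}--\eqref{eq:contral2} is at most $e^{-2\beta t}$, and taking square roots gives the asserted geometric contraction. The argument is largely routine; the main thing to be careful about is \emph{well-posedness and integrability}. One needs the global Lipschitz structure of Assumption \ref{assump:2dis} (items (3)--(4)) to guarantee that \eqref{eq:SDErevc} admits a unique strong solution with finite second moments, so that Itô's formula and the interchange of $\tfrac{d}{dt}$ with $\mathbb{E}$ are both justified, and so that the coupled pair makes sense as two solutions of the same equation sharing a driving noise. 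A secondary, purely bookkeeping point is tracking the constant in the exponent consistently with the definition of $\beta$ in \eqref{eq:contral2}.
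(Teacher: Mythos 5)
Your proposal is correct and follows essentially the same route as the paper's proof: synchronous coupling so the noise terms cancel, the monotonicity of $b$ from Assumption \ref{assump:sensitivity}(1) for the drift term, the Lipschitz bound on $s_\theta$ from Assumption \ref{assump:2dis}(4) for the score term, and Gr\"onwall combined with Assumption \ref{assump:contraction}. The ``bookkeeping point'' you flag is real but inherited from the paper itself: after taking square roots the exponent should read $e^{-\beta t}$ rather than $e^{-2\beta t}$.
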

\begin{proof}
Note that
\begin{multline*}
d|\overline{X}^x_s - \overline{X}^y_s|^2
= 2 \left(\overline{X}^x_s - \overline{X}^y_s \right)\cdot
\bigg(-b(T-s, \overline{X}^x_s) + \sigma^2(T-s) s_\theta(T-s, \overline{X}^x_s)\\
+ b(T-s, \overline{X}^y_s) - \sigma^2(T-s) s_\theta(\overline{X}^y_s)\bigg) ds.
\end{multline*}
Thus, 
\begin{equation}
\label{eq:diffiden}
\begin{aligned}
\frac{d}{ds} \mathbb{E}|\overline{X}^x_s - \overline{X}^y_s|^2
= -2 & \underbrace{\mathbb{E}\left[(\overline{X}^x_s - \overline{X}^y_s) \cdot (b(T-s, \overline{X}^x_s) - b(T-s, \overline{X}^y_s))\right]}_{(a)} \\
& + 2 \underbrace{\mathbb{E}\left[(\overline{X}^x_s - \overline{X}^y_s) \sigma^2(T-s) 
(s_\theta(T-s, \overline{X}^x_s) - s_\theta(T-s,\overline{X}^y_s))\right]}_{(b)}.
\end{aligned}
\end{equation}
By Assumption \ref{assump:sensitivity} (1),
we get
\begin{equation}
\label{eq:terma_diff}
(a) \ge  r_b(T-s) \mathbb{E}|\overline{X}^x_s - \overline{X}^y_s|^2.
\end{equation}
By Assumption \ref{assump:2dis} (4), we obtain
\begin{equation}
\label{eq:termb_diff}
(b) \le L_s \sigma^2(T-s) \mathbb{E}|\overline{X}^x_s - \overline{X}^y_s|^2.
\end{equation}
Combining \eqref{eq:diffiden}, \eqref{eq:terma_diff} 
and \eqref{eq:termb_diff} yields
\begin{equation*}
\frac{d}{ds} \mathbb{E}|\overline{X}^x_s - \overline{X}^y_s|^2 
\le -2 (r_b(T-s) - L_s \sigma^2(T-s)) \mathbb{E}|\overline{X}^x_s - \overline{X}^y_s|^2.
\end{equation*}
By Gr\"onwall's inequality, we have
\begin{equation*}
\mathbb{E}|\overline{X}^x_s - \overline{X}^y_s|^2 
\le \mathbb{E}|x - y|^2 \exp \left(-2 \int_0^t (r_b(T-s) - L_s \sigma^2(T-s)) ds  \right),
\end{equation*}
which, by the condition \eqref{eq:condcontraction}, yields \eqref{eq:contraction}
\end{proof}

Next we deal with the {\em local (one-step) discretization error} of the process $\overline{X}$.
Fixing $t_{\star} \le T-\delta$, 
the (one-step) discretization of $\overline{X}$ starting at $\overline{X}_{t_\star} = x$
is:
\begin{equation}
\label{eq:localEM}
\begin{aligned}
\widehat{X}^{t_\star, x}_1 = x + (-b(T-t_\star, x) + 
\sigma^2(T-t_\star) s_\theta(T-t_\star, x)) \delta + \sigma(T - t_\star) (B_{t_\star + \delta} - B_{t_\star}). 
\end{aligned}
\end{equation}
The following lemma provides an estimate of the local discretization error.
\begin{lemma}
\label{lem:localest}
Let $(\overline{X}_t^{t_\star, x}, \, t_\star \le t \le T)$ be defined by \eqref{eq:SDErevc} with $\overline{X}^{t_\star, x}_{t_\star} = x$,
and $\widehat{X}^{t_\star, x}_1$ be given by \eqref{eq:localEM}.
Let Assumption \ref{assump:2dis} hold. 
Then for $\delta$ sufficiently small (i.e. $\delta \le \overline{\delta}$ for some $\overline{\delta} < 1$),
there exists $C_1, C_2 > 0$ independent of $\delta$ and $x$ such that
\begin{equation}
\label{eq:localest1}
\left(\mathbb{E}|\overline{X}^{t_\star, x}_{t_{\star} + \delta} -\widehat{X}^{t_\star, x}_1 |^2 \right)^{\frac{1}{2}}
\le (C_1 + C_2 \sqrt{\mathbb{E}|x|^2} )^{\frac{1}{2}} \delta^{\frac{3}{2}},
\end{equation}
\begin{equation}
\label{eq:localest2}
|\mathbb{E}(\overline{X}^{t_\star, x}_{t_{\star} + \delta} -\widehat{X}^{t_\star, x}_1)|
\le (C_1 + C_2 \sqrt{\mathbb{E}|x|^2} )^{\frac{1}{2}} \delta^{\frac{3}{2}}.
\end{equation}
\end{lemma}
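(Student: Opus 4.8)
The plan is to estimate the local one-step error by comparing the true backward process $\overline{X}^{t_\star,x}$ started at $x$ over the interval $[t_\star, t_\star+\delta]$ with its Euler--Maruyama approximant $\widehat{X}^{t_\star,x}_1$. For brevity write $\overline{b}(t,y) := -b(T-t,y) + \sigma^2(T-t) s_\theta(T-t,y)$ and $\overline{\sigma}(t) := \sigma(T-t)$, so that the true process satisfies $d\overline{X}_t = \overline{b}(t,\overline{X}_t)\,dt + \overline{\sigma}(t)\,dB_t$ and the one-step scheme freezes the coefficients at $(t_\star, x)$. Then the difference admits the integral representation
\begin{equation*}
\overline{X}^{t_\star,x}_{t_\star+\delta} - \widehat{X}^{t_\star,x}_1
= \int_{t_\star}^{t_\star+\delta} \big(\overline{b}(s,\overline{X}_s) - \overline{b}(t_\star,x)\big)\,ds
+ \int_{t_\star}^{t_\star+\delta} \big(\overline{\sigma}(s) - \overline{\sigma}(t_\star)\big)\,dB_s.
\end{equation*}
I would first control the two increments of the coefficients using the regularity from Assumption \ref{assump:2dis}: the drift $\overline{b}$ inherits a Lipschitz-in-$(t,y)$ bound from parts (3) and (4) (governing $b$ and $s_\theta$) together with (1) and (2) (bounding $\sigma$ and its variation), while $\overline{\sigma}$ is Lipschitz in time by (1).

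Next I would handle the drift integral. Splitting $\overline{b}(s,\overline{X}_s)-\overline{b}(t_\star,x)$ into a time part $\overline{b}(s,\overline{X}_s)-\overline{b}(t_\star,\overline{X}_s)$ and a space part $\overline{b}(t_\star,\overline{X}_s)-\overline{b}(t_\star,x)$, the Lipschitz estimates bound these by $C(|s-t_\star| + |\overline{X}_s - x|)$. The key auxiliary input is a short-time moment estimate $\mathbb{E}|\overline{X}_s - x|^2 \le C(1+\mathbb{E}|x|^2)(s-t_\star)$, which follows from a standard application of It\^o's formula, the growth bound on $\overline{b}$ (for which Assumption \ref{assump:2dis}(5) controls $s_\theta$ at the boundary and the Lipschitz property extends the growth to all $t$), and Gr\"onwall. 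Feeding this into the drift integral and using Cauchy--Schwarz over an interval of length $\delta$ yields a contribution of order $(C_1 + C_2\sqrt{\mathbb{E}|x|^2})^{1/2}\delta^{3/2}$ in $L^2$. For the diffusion integral I would use the It\^o isometry together with the time-Lipschitz bound $|\overline{\sigma}(s)-\overline{\sigma}(t_\star)| \le L_\sigma |s-t_\star| \le L_\sigma \delta$, giving $\mathbb{E}|\int (\overline{\sigma}(s)-\overline{\sigma}(t_\star))dB_s|^2 \le L_\sigma^2 \delta^2 \cdot \delta = L_\sigma^2 \delta^3$, again of the desired order. Combining the two contributions gives \eqref{eq:localest1}.

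For the bias estimate \eqref{eq:localest2}, I would exploit that the stochastic integral is a martingale with zero expectation, so $\mathbb{E}(\overline{X}^{t_\star,x}_{t_\star+\delta} - \widehat{X}^{t_\star,x}_1) = \mathbb{E}\int_{t_\star}^{t_\star+\delta}(\overline{b}(s,\overline{X}_s)-\overline{b}(t_\star,x))\,ds$. The diffusion term drops out entirely, and the same drift bound as above, now applied under a single expectation rather than in $L^2$ norm, gives the $\delta^{3/2}$ rate. The main obstacle I anticipate is obtaining the short-time moment bound $\mathbb{E}|\overline{X}_s - x|^2 \le C(s-t_\star)$ with a constant that depends on $x$ only through $\mathbb{E}|x|^2$ in the precise affine form $(C_1 + C_2\sqrt{\mathbb{E}|x|^2})$ demanded by the statement; this requires carefully tracking how the linear-growth constant of $\overline{b}$ propagates through the moment estimate, which is delicate because the drift growth is controlled via Assumption \ref{assump:2dis}(5) only at the terminal time $T$ and must be transported to general times using the Lipschitz continuity in (3) and (4).
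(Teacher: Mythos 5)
Your proposal follows essentially the same route as the paper's proof: the same integral representation of the one-step error, Lipschitz/linear-growth control of the drift and diffusion coefficients from Assumption \ref{assump:2dis}, the standard short-time moment bound $\mathbb{E}|\overline{X}_s-x|^2\le C(1+\mathbb{E}|x|^2)(s-t_\star)$ (which the paper imports from Kloeden--Platen rather than rederiving), Cauchy--Schwarz plus the It\^o isometry for \eqref{eq:localest1}, and the vanishing of the martingale term for the bias estimate \eqref{eq:localest2}. The only cosmetic difference is that you keep the drift as a single function $\overline{b}$ while the paper splits it into the $b$ part and the $\sigma^2 s_\theta$ part before estimating.
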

\begin{proof}
For ease of presentation, 
we write $\overline{X}_{t}$ (resp. $\widehat{X}_1$) 
for $\overline{X}^{t_\star, x}_t$ (resp. $\widehat{X}^{t_\star, x}_1$).
Without loss of generality, set $t_\star = 0$.
We have
\begin{equation*}
\begin{aligned}
& \overline{X}_\delta = x + \int_0^\delta -b(T-t, \overline{X}_t) + \sigma^2(T-t) s_\theta(T-t, \overline{X}_t) dt + \int_0^\delta \sigma(T-t) dB_t, \\
& \widehat{X}_1 = x + \int_0^\delta -b(T,x) + \sigma^2(T) s_\theta(T,x) dt
+ \int_0^\delta \sigma(T) dB_t.
\end{aligned}
\end{equation*}
So
\begin{equation}
\label{eq:diffthree}
\begin{aligned}
& \mathbb{E}|\overline{X}_\delta -\widehat{X}_1|^2  \\
& = \mathbb{E} \bigg|\int_0^\delta b(T,x) - b(T-t, \overline{X}_t) dt + 
\int_0^\delta \sigma^2(T-t) s_\theta(T-t, \overline{X}_t) -\sigma^2(T) s_\theta(T,x) dt  \\
& \qquad \qquad \qquad \qquad \qquad \qquad \qquad \qquad \qquad + \int_0^\delta \sigma (T-t) - \sigma(T) dB_t \bigg|^2 \\
&\le 3 \mathbb{E} \bigg( \left|\int_0^\delta b(T,x) - b(T-t, \overline{X}_t) dt\right|^2
+ \left| \int_0^\delta \sigma^2(T-t) s_\theta(T-t, \overline{X}_t) -\sigma^2(T) s_\theta(T,x) dt\right|^2 \\
&  \qquad \qquad \qquad \qquad \qquad \qquad \qquad \qquad \qquad +
\left| \int_0^\delta \sigma (T-t) - \sigma(T) dB_t \right|^2 \bigg) \\
& \le 3 \bigg(\delta  \underbrace{\int_0^\delta \mathbb{E} |b(T,x) - b(T-t, \overline{X}_t)|^2 dt}_{(a)} + \delta \underbrace{\int_0^\delta \mathbb{E} |\sigma^2(T-t) s_\theta(T-t, \overline{X}_t) -\sigma^2(T) s_\theta(T,x)|^2 dt}_{(b)} \\
& \qquad \qquad \qquad \qquad \qquad \qquad \qquad \qquad \qquad +
\underbrace{\int_0^\delta |\sigma(T-t) - \sigma(T)|^2 dt}_{(c)} \bigg),
\end{aligned}
\end{equation}
where we use the Cauchy–Schwarz inequality and It\^o's isometry in the last inequality. 
By Assumption \ref{assump:2dis} (1), we get
\begin{equation}
\label{eq:diffthreec}
(c) \le \int_0^\delta L_\sigma^2 t^2 dt = \frac{L_\sigma^2}{3} \delta^3.
\end{equation}
By Assumption \ref{assump:2dis} (3), we have
\begin{equation*}
(a) \le \int_0^\delta 2 L_b^2 (t^2 + \mathbb{E}|\overline{X}_t - x|^2) dt
= 2L_b^2 \left(\frac{\delta^3}{3} + \int_0^\delta \mathbb{E}|\overline{X}_t - x|^2 dt \right).
\end{equation*}
According to \cite[Theorem 4.5.4]{KP92},
we have $\mathbb{E}|\overline{X}_t - x|^2 \le C(1 + \mathbb{E}|x|^2)t e^{Ct}$ 
for some $C>0$ (independent of $x$).
Consequently, for $t \le \delta$ sufficiently small (bounded by $\overline{\delta} < 1$),
\begin{equation*}
\mathbb{E}|\overline{X}_t - x|^2 \le C'(1 + \mathbb{E}|x|^2) t, \quad \mbox{for some } C' > 0 \mbox{ (independent of } \delta, x).
\end{equation*}
We then get
\begin{equation}
\label{eq:diffthreea}
(a) \le 2 L_b^2 \left(\frac{\delta^3}{3} + \frac{C'(1 + \mathbb{E}|x|^2)}{2} \delta^2  \right) \le 2 L_b^2 \left(\frac{1}{3} + \frac{C'}{2} + \frac{C'}{2} \mathbb{E}|x|^2 \right) \delta^2.
\end{equation}
Similarly, we obtain by Assumption \ref{assump:2dis} (1)(2)(4)(5):
\begin{equation}
\label{eq:diffthreeb}
(b) \le C'' (1 + \mathbb{E}|x|^2) \delta^2, \quad \mbox{for some } C''>0 \mbox{ (independent of } \delta, x).
\end{equation}
Combining \eqref{eq:diffthree}, \eqref{eq:diffthreec},
\eqref{eq:diffthreea} and \eqref{eq:diffthreeb} yields
the estimate \eqref{eq:localest1}.

Next we have
\begin{align*}
& |\mathbb{E}(\overline{X}_\delta - \widehat{X}_1)|  \\
& = \left|\mathbb{E}\int_0^\delta b(T,x) - b(T-t,\overline{X}_t) dt
+ \mathbb{E}\int_0^\delta \sigma^2(T-t) s_\theta(T-t, \overline{X}_t) - \sigma^2(T)s_\theta(T,x) dt \right| \\
& \le \int_0^\delta \mathbb{E}|b(T,x) - b(T-t,\overline{X}_t)| dt
+ \int_0^\delta \mathbb{E}|\sigma^2(T-t) s_\theta(T-t, \overline{X}_t) - \sigma^2(T)s_\theta(T,x)| dt \\
& \le C''' \int_0^\delta  \left(t (1 + \mathbb{E}|x|) + \mathbb{E}|\overline{X}_t - x| \right) dt \\
& \le C''''(1 + \sqrt{\mathbb{E}|x|^2}) \delta^{\frac{3}{2}},
\quad \mbox{for some } C''''>0 \mbox{ (independent of } \delta, x).
\end{align*}
where the third inequality follows from Assumption \ref{assump:2dis}, and the last inequality is due to the fact that
$\mathbb{E}|\overline{X}_t - x| \le \left(\mathbb{E}|\overline{X}_t - x|^2\right)^{\frac{1}{2}} \le \sqrt{C'(1 + \mathbb{E}|x|^2)t}$.
This yields the estimate \eqref{eq:localest2}.
\end{proof}

\begin{proof}[Proof of Theorem \ref{thm:globalerr}]
The proof is split into four steps.

\smallskip
{\bf Step 1}. Recall that $t_k = k \delta$ for $k = 0, \ldots, N$.
Denote $\overline{X}_k:=\overline{X}_{t_k}$,
and let 
\begin{equation*}
e_k: = \left(\mathbb{E}|\overline{X}_k - \widehat{X}_k|^2 \right)^{\frac{1}{2}}.
\end{equation*}
The idea is to build a recursion for the sequence $(e_k)_{k = 0, \ldots, N}$.
Also write $(\overline{X}^{t_\star, x}_t, \, t_\star \le t \le T)$ to emphasize that
the reversed SDE \eqref{eq:SDErevc} starts at $\overline{X}^{t_\star, x}_{t_\star} = x$,
so $\overline{X}_{k+1} = \overline{X}^{t_k, \overline{X}_k}_{t_{k+1}}$.
We have
\begin{equation}
\label{eq:decompabc}
\begin{aligned}
e_{k+1}^2 & = \mathbb{E}\left|\overline{X}_{k+1} - \overline{X}_{t_{k+1}}^{t_k, \widehat{X}_k} +  \overline{X}_{t_{k+1}}^{t_k, \widehat{X}_k}- \widehat{X}_{k+1} \right|^2 \\
& = \underbrace{\mathbb{E}|\overline{X}_{k+1} - \overline{X}_{t_{k+1}}^{t_k, \widehat{X}_k}|^2}_{(a)} + \underbrace{\mathbb{E}|\overline{X}_{t_{k+1}}^{t_k, \widehat{X}_k}- \widehat{X}_{k+1}|^2}_{(b)} + 2 \underbrace{\mathbb{E}\left[ (\overline{X}_{k+1} - \overline{X}_{t_{k+1}}^{t_k, \widehat{X}_k}) (\overline{X}_{t_{k+1}}^{t_k, \widehat{X}_{k}}- \widehat{X}_{k+1})\right]}_{(c)}.
\end{aligned}
\end{equation}

{\bf Step 2}. We analyze the term (a) and (b). 
By Lemma \ref{lem:contraction} (the contraction property), 
we get
\begin{equation}
\label{eq:terma1}
(a) = \mathbb{E}|\overline{X}_{t_{k+1}}^{t_k, \overline{X}_k} - \overline{X}_{t_{k+1}}^{t_k, \widehat{X}_k}|^2 \le e_k^2 \exp(- 2 \beta \delta).
\end{equation}
By \eqref{eq:localest1} (in Lemma \ref{lem:localest}), we have
\begin{equation}
\label{eq:termb1}
(b) \le \left(C_1 + C_2 \mathbb{E}|\widehat{X}_k|^2 \right) \delta^3.
\end{equation}

{\bf Step 3}. We analyze the cross-product (c). 
By splitting
\begin{equation*}
\overline{X}_{k+1} - \overline{X}_{t_{k+1}}^{t_k, \widehat{X}_k}
= (\overline{X}_k - \widehat{X}_k) + \underbrace{\left[ (\overline{X}_{k+1} - \overline{X}_k) - (\overline{X}_{t_{k+1}}^{t_k, \widehat{X}_k} - \widehat{X}_k)\right]}_{:= d_\delta(\overline{X}_k, \widehat{X}_k)},
\end{equation*}
we obtain
\begin{equation}
\label{eq:termc2}
(c) = \underbrace{\mathbb{E}\left[(\overline{X}_k - \widehat{X}_k) (\overline{X}_{t_{k+1}}^{t_k, \widehat{X}_{k}}- \widehat{X}_{k+1})\right]}_{(d)} + \underbrace{\mathbb{E}\left[d_\delta(\overline{X}_k, \widehat{X}_k) (\overline{X}_{t_{k+1}}^{t_k, \widehat{X}_{k}}- \widehat{X}_{k+1})\right]}_{(e)}.
\end{equation}
For the term (d), we have
\begin{equation}
\label{eq:termd}
\begin{aligned}
(d)& = \mathbb{E}\left[(\overline{X}_k - \widehat{X}_k) \, \mathbb{E}(\overline{X}_{t_{k+1}}^{t_k, \widehat{X}_{k}}- \widehat{X}_{k+1}| \mathcal{F}_k)\right] \\
& \le e_k \left(\mathbb{E}|\mathbb{E}(\overline{X}_{t_{k+1}}^{t_k, \widehat{X}_{k}}- \widehat{X}_{k+1}| \mathcal{F}_k)|^2 \right)^{\frac{1}{2}} \\
& \le e_k \left(C_1 + C_2 \sqrt{\mathbb{E}|\widehat{X}_k|^2} \right) \delta^{\frac{3}{2}},
\end{aligned}
\end{equation}
where we use the tower property (of the conditional expectation) in the first equation, the Cauchy-Schwarz inequality in the second inequality, and \eqref{eq:localest2} in the final inequality. 
According to \cite[Lemma 1.3]{MT04}, there exists $C_0 > 0$ (independent of $\delta, \widehat{X}_k$) such that
\begin{equation}
\label{eq:dest}
\left(\mathbb{E}d^2_\delta(\overline{X}_k, \widehat{X}_k)\right)^{\frac{1}{2}} \le C_0 e_k \sqrt{\delta}.
\end{equation}
Thus,
\begin{equation}
\label{eq:terme}
\begin{aligned}
(e) & \le \left( \mathbb{E} d^2_\delta(\overline{X}_k, \widehat{X}_k \right)^{\frac{1}{2}} \left( \mathbb{E}|\overline{X}_{t_{k+1}}^{t_k, \widehat{X}_{k}}- \widehat{X}_{k+1}|^2\right)^{\frac{1}{2}} \\
& \le C_0 e_k \left(C_1 + C_2 \sqrt{\mathbb{E}|\widehat{X}_k|^2} \right)\delta^2.
\end{aligned}
\end{equation}
where we use \eqref{eq:localest1} and \eqref{eq:dest} in the last inequality.
Combining \eqref{eq:termc2}, \eqref{eq:termd} and \eqref{eq:terme} yields
for $\delta$ sufficiently small,
\begin{equation}
\label{eq:termc}
(c) \le e_k \left(C'_1 + C'_2 \sqrt{\mathbb{E}|\widehat{X}_k|^2} \right) \delta^{\frac{3}{2}}, \quad \mbox{for some } C'_1, C'_2 > 0 \mbox{ (independent of } \delta, \widehat{X}_k).
\end{equation}

{\bf Step 4}. Combining \eqref{eq:decompabc} with \eqref{eq:terma1}, \eqref{eq:termb1} and \eqref{eq:termc} yields
\begin{equation*}
e_{k+1}^2 \le e_k^2 \exp(- 2 \beta \delta) + \left( C_1 + C_2 \mathbb{E}|\widehat{X}_k|^2 \right) \delta^3 + e_k \left(C'_1 + C'_2 \sqrt{\mathbb{E}|\widehat{X}_k|^2} \right) \delta^{\frac{3}{2}}.
\end{equation*}
A standard argument shows that Lemma \ref{lem:contraction} (the contraction property) implies
$\mathbb{E}|\overline{X}_t|^2 \le C$ for some $C > 0$.
Thus, $\mathbb{E}|\widehat{X}_k|^2 \le 2(C + e_k^2)$.
As a result, for $\delta$ sufficiently small,
\begin{equation}
\label{eq:rec1}
e_{k+1}^2 \le e_k^2\left(1 - \frac{3}{4} \beta \delta \right)
+ D_1 \delta^3 + D_2 e_k^2 \left(\delta^3 + \delta^{\frac{3}{2}} \right)  + D_3 e_k \delta^{\frac{3}{2}},
\end{equation}
for some $D_1, D_2, D_3> 0$ (independent of $\delta$).
Note that 
\begin{equation*}
D_2 e_k^2 \left(\delta^3 + \delta^{\frac{3}{2}} \right) \le \frac{1}{4} e_k^2 \beta \delta, \quad \mbox{ for } \delta \mbox{ sufficiently small},
\end{equation*}
and 
\begin{equation*}
D_3 e_k \delta^{\frac{3}{2}} \le \frac{1}{4}e_k^2 \beta \delta + \frac{2 D^{2}_3}{\beta} \delta^2.
\end{equation*}
Thus, the estimate \eqref{eq:rec1} leads to
\begin{equation}
\label{eq:rec2}
e_{k+1}^2 \le e_k^2 \left(1 - \frac{1}{4} \beta \delta \right) + D \delta^2, \quad 
\mbox{for some } D > 0 \mbox{ (independent of } \delta).
\end{equation}
Unfolding the inequality \eqref{eq:rec2} yields
the estimate \eqref{eq:globalerr}.
\end{proof}

\quad As a remark, if we can improve the estimate in \eqref{eq:localest2} to
\begin{equation}
|\mathbb{E}(\overline{X}^{t_\star, x}_{t_{\star} + \delta} -\widehat{X}^{t_\star, x}_1)|
\le (C_1 + C_2 \sqrt{\mathbb{E}|x|^2} )^{\frac{1}{2}} \delta^{2},
\end{equation} 
(i.e. $\delta^2$ local error instead of $\delta^{\frac{3}{2}}$),
then the discretization error is $C \delta$.

\subsection*{G. Experimental Details}
We provide implementation details of the experiments in Section \ref{sc4}. 
All codes will soon be available at github.

\subsection*{G.1. 1-dimensional data experiments}
For this 1-dimensional experiments, we start from a single point mass, 
and the parameters of the DPMs (OU and COU) are chosen as $\theta\equiv 0.2$, $\sigma(t)\equiv 0.5$. 

\subsection*{G.2. Swiss Roll and MNIST experiments}
In both experiments, 
we use contractive subVP with predictor-corrector sampler. 
We recommend to use $\beta_{min}=0.01$ and $\beta_{max}=8$ for a good result. The signal-noise-ratio is set to be $0.2$ for Swiss Roll,
and $0.1$ for MNIST. 
The Jupyter Notebooks can be found in the zip file of supplementary materials.

\subsection*{G.3. CIFAR10 experiments}
We use contractive subVP with predictor-corrector sampler. with all the other settings the same as VE SDE in \cite{Song20}. We recommend to use $\beta_{min}=0.01$ and $\beta_{max}=8$ for a good result.
The signal-noise-ratio is set to be 0.11 for the best result. 
The experiments based on NCSN++ \cite{Song20} are as below in Table 5.

\quad For EDM \cite{karras2022elucidating} with contraction, we adopt $f(0)=0.98$ to yield the best result by adding contraction, i.e. $\epsilon=0.02$. We find that $f(0)=0.97$ or $0.99$ also leads to better results than original EDM. Our denoising sampler is based on the deterministic sampler ($2^{\text{nd}}$ order scheme) of EDM paper and yields the same settings of sampling steps or other sampler constants. For computational resource, we used 4 L40S GPUs, and these results could readily be realized within 0.5 hours for each separate task.

\begin{table}[!htbp]
\centering
\begin{tabular}{l c c c c}
    \toprule
        Parameter & Value\\
         \midrule
        $\beta_{min} $ & $0.01$\\
        $\beta_{max} $ & $8$\\
        $snr$ & $0.11$\\
        $\operatorname{grad}_{clip} $ & $10$\\
        $\operatorname{\alpha_{lr}}$ & $5e^{-4}$\\
    \bottomrule
\end{tabular}
\vspace{5 pt}
\label{tab:CIFAR10 details}
\captionof{table}{Parameters of the CDPM for CIFAR-10.}
\end{table}

\subsection*{G.4 More examples of CIFAR10 synthesis by CsubVP}
Here we provide more examples of CIFAR10 32$\times$32 synthesis by CsubVP SDEs in Figure \ref{fig:More CIFAR10syn}.
\begin{figure}
\vspace{2pt}
    \centering
    \includegraphics[width=1\linewidth]{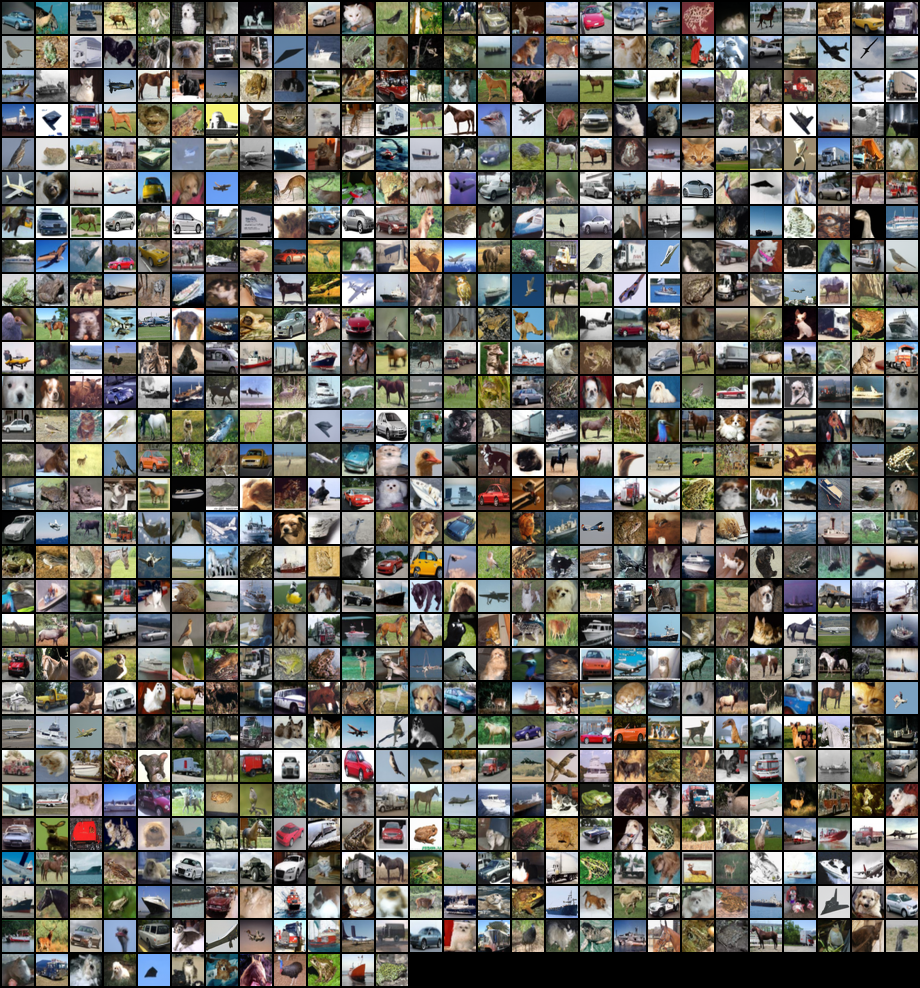}
    \captionof{figure}{CsubVP CIFAR10 samples}
    \label{fig:More CIFAR10syn}
\end{figure}

\end{document}